\newcommand{\mc}{\mathcal}
\newcommand{\argmin}{\mathop{\rm argmin}\limits}
\newtheorem{theorem}{\bf Theorem}
\newtheorem{assumption}{\bf Assumption}
\newtheorem{proposition}{\bf Proposition}
\title{\LARGE \bf
Dynamic Allocation of Visual Attention for Vision-based Autonomous Navigation under Data Rate Constraints}
\author{ Ali Reza Pedram$^1$ \and Riku Funada$^2$ \and Takashi Tanaka$^3$ 
\thanks{*This work is supported by DARPA Grant D19AP00078, FOA-AFRL-AFOSR-2019-0003, NSF Award 1944318, and JSPS KAKENHI Grand Number 21K20425.}
\thanks{$^{1}$ Walker Department of Mechanical Engineering, University of Texas at Austin. {\tt\small apedram@utexas.edu}.
        $^2$ Department of Systems and Control, Tokyo Institute of Technology. {\tt\small funada@sc.e.titech.ac.jp}.
         $^{3}$Department of Aerospace Engineering and Engineering Mechanics, University of Texas at Austin.
        {\tt\small ttanaka@utexas.edu}. }%
}
\begin{document}

\maketitle
\thispagestyle{empty}
\pagestyle{empty}

\begin{abstract}
This paper considers the problem of task-dependent (top-down) attention allocation for vision-based autonomous navigation using known landmarks. Unlike the existing paradigm in which landmark selection is formulated as a combinatorial optimization problem, we model it as a resource allocation problem where the decision-maker (DM) is granted extra freedom to control the degree of attention to each landmark. The total resource available to DM is expressed in terms of the capacity limit of the in-take information flow, which is quantified by the directed information from the state of the environment to the DM’s observation. We consider a receding horizon implementation of such a controlled sensing scheme in the Linear-Quadratic-Gaussian (LQG) regime. The convex-concave procedure is applied in each time step, whose time complexity is shown to be linear in the horizon length if the alternating direction method of multipliers (ADMM) is used. Numerical studies show that the proposed formulation is sparsity-promoting in the sense that it tends to allocate zero data rate to uninformative landmarks.
\end{abstract}

\section{INTRODUCTION}
\label{sec:intro}
It is generally believed that human brain does not have sufficient computational throughput to process raw visual input entirely \cite{marois2005capacity}. Even though the capacity of the optimal nerve from retina to the early stage cortical areas is estimated to be about $10^7$ bits per second (bps), only a small fraction of it is known to be processed further (Fig.~\ref{fig:vision_control}). This limitation is known as the \emph{attention bottleneck} \cite{zhaoping2014understanding}, whose capacity is estimated to be as small as $100$ bps \cite{sziklai1956some}. Since high-level decisions by downstream brain areas must rely on this limited information, the information content transmitted through the attention bottleneck has to be carefully selected so as to best assist the task to be completed by the decision-maker (DM). Task-dependent (top-down) attention mechanisms are believed to play major roles in this information selection. 

Although the task-dependent nature of human attention (e.g., eye movements) is well-known, theoretical understanding of top-down visual attention lags far behind the understandings of saliency-based (bottom-up) counterpart. This is firstly because top-down attention involves a larger brain areas including downstream components of the visual pathway (e.g., extrastriate cortex) whose functionalities are far less understood compared to early-stage components (e.g., primary visual cortex). Another reason is the lack of a theoretical framework that formally translates the notion of task-relevance into a mathematical language. Unlike bottom-up attention, top-down attention must be understood in the dynamic interaction between the DM and the environment, and modeling such an interaction itself is a challenging task. Indeed, even though methodologies for predicting bottom-up attention (e.g., computing a saliency map \cite{itti2000saliency} from an input image) are widely available today, computational tools for predicting top-down attention are still very limited.

To partially fill the lack of the theoretical framework for top-down visual attention, this paper formulates the problem of dynamic allocation of visual attention as a data rate constrained optimal controlled sensing problem. Our goal is to provide a mathematical metric to impose data rate constraints on the attention bottleneck and to develop a preliminary algorithm for attention allocation (i.e., allocation of data rate on landmarks in a visual scene). We analyze the impact of the capacity of the attention bottleneck on the visual attention in the context of a simple vision-based navigation scenario using known landmarks.

\begin{figure}
    \centering
    \includegraphics[width = \columnwidth]{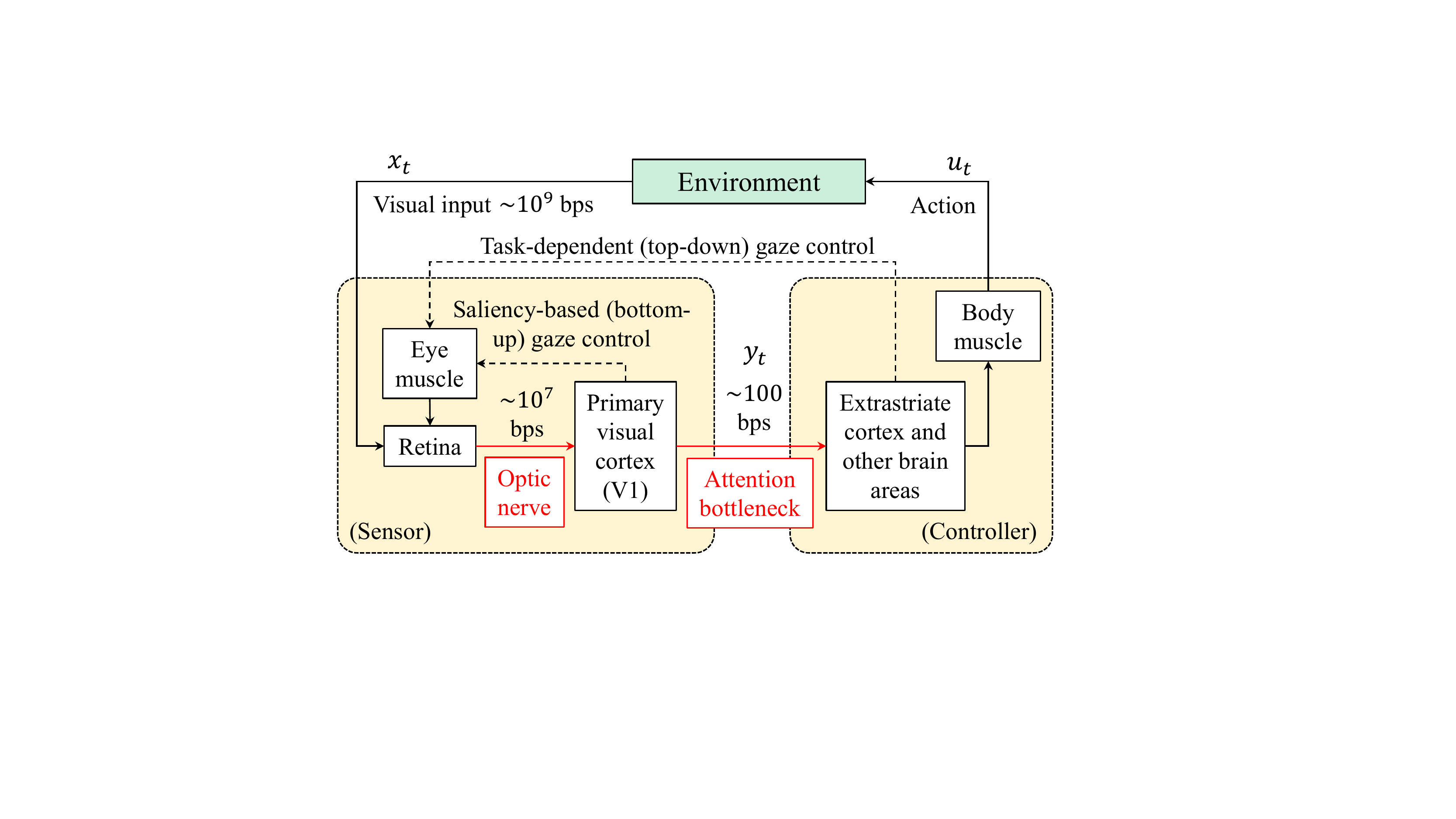}
    \vspace{-3ex}
    \caption{Controlled sensing architecture of human visual attention \cite{zhaoping2014understanding}.}
    \label{fig:vision_control}
    \vspace{-3ex}
\end{figure}

\subsection{Attention allocation and directed information}

Fig.~\ref{fig:vision_control} can be viewed as a controlled sensing architecture with a data rate constraint at the sensor output, which has been actively studied in the networked control systems (NCS) literature. 
Consequently, our approach is closely related to the method for simultaneous sensor-controller synthesis for minimum information control studied in the NCS literature. 
Assuming that $\bm{x}_{1:T}\triangleq\{\bm{x}_t\}_{t=1, 2, ... , T}$ is the state random process of the environment (plant) and $\bm{y}_{1:T}\triangleq\{\bm{y}_t\}_{t=1, 2, ... , T}$ is the random process of the sensor output, prior work \cite{silva2010framework,tanaka2016rate,kostina2019rate} has shown that the information content of $\bm{y}_{1:T}$ can be compressed by means of entropy coding up to 
\begin{equation}
\label{eq:di_def}
    I(\bm{x}_{1:T}\rightarrow \bm{y}_{1:T})\triangleq \sum_{t=1}^T I(\bm{x}_{1:t};\bm{y}_t|\bm{y}_{1:t-1}) \qquad \text{[bits].}
\end{equation}
The quantity \eqref{eq:di_def} is known as directed information (DI) \cite{massey1990causality}. Therefore, the optimal sensor-controller pair to minimize the control cost $\sum_{t=1}^T \mathbb{E}[J(\bm{x}_t,\bm{u}_t)]$ subject to the data rate constraint $R$ bits is characterized by
\begin{subequations}
\label{eq:sensor_controller_synthesis}
\begin{align}
    \min_{\text{sensor, controller}} &\qquad \sum\nolimits_{t=1}^T \mathbb{E}[J(\bm{x}_t,\bm{u}_t)] \\
    \text{s.t.}\qquad&\qquad I(\bm{x}_{1:T}\rightarrow \bm{y}_{1:T}) \leq R.
\end{align}
\end{subequations}
\begin{figure}
    \centering
    \includegraphics[width = \columnwidth]{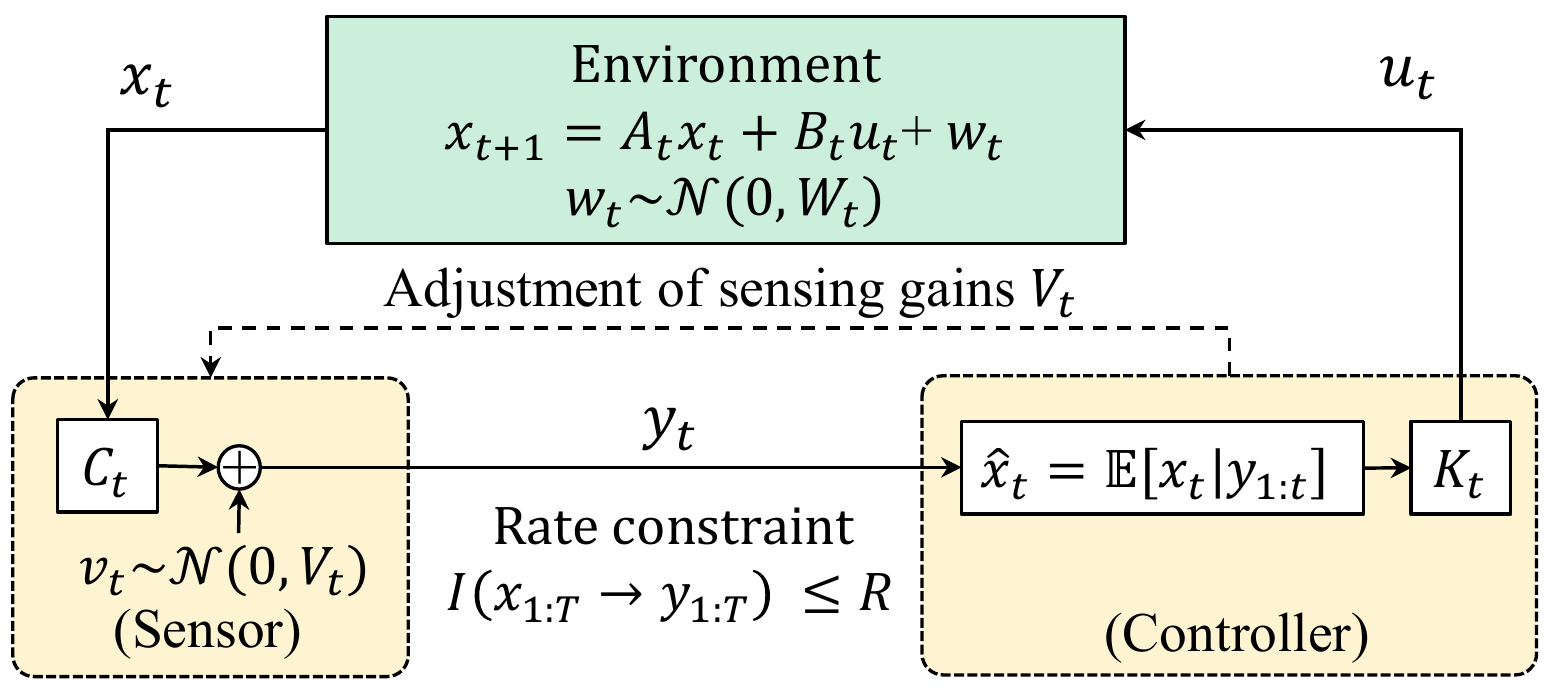}
    \vspace{-2ex}
    \caption{Controlled sensing of linear Gaussian systems.}
    \label{fig:sensor_control}
    \vspace{-3ex}
\end{figure}
To obtain useful insights on top-down visual attention through the mathematical optimization \eqref{eq:sensor_controller_synthesis}\footnote{Although currently there is no evidence that \eqref{eq:sensor_controller_synthesis} is the guiding principle of human top-down attention, synthesizing visual perception strategies under \eqref{eq:sensor_controller_synthesis} and comparing them with experimental data (e.g., eye tracker data) would be an interesting future study. Instead of \eqref{eq:sensor_controller_synthesis}, we also consider its soft-constrained version \eqref{eq:sparse-idea} below.}, in this paper, we apply \eqref{eq:sensor_controller_synthesis} to a simple task for DM to move along a given trajectory using bearing observation of known landmarks.
We consider the situation where the nonlinear dynamics of the environment (DM's position) can be approximated by a linear time-varying (LTV) system with Gaussian noise as shown in Fig.~\ref{fig:sensor_control}. We assume that DM's task quality is measured by a quadratic function of the state (square deviation of DM's location from the reference).\footnote{The problem \eqref{eq:sensor_controller_synthesis} in the LQG regime has been studied in the past \cite{tanaka2017lqg}, where it is shown that the optimal sensor-controller pair is given by a linear sensor $\bm{y}_t=C_t\bm{x}_t+\bm{v}_t, \bm{v}_t\sim\mathcal{N}(0,V_t)$ followed by a linear controller $\bm{u}_t= K_t\mathbb{E}[\bm{x}_t|\bm{x}_{1:t}]$, where sensor gain $C_t$, diagonal covariance $V_t$ and controller gain $K_t$ are obtained by a certain convex optimization. However, 
the problem set up in this paper is different from \cite{tanaka2017lqg} in that the matrix $C_t$ cannot be freely chosen.}
DM perceives its current location via the sensing mechanism
\begin{equation}
\label{eq:sensor}
\bm{y}_t=C_t\bm{x}_t+\bm{v}_t, \quad \bm{v}_t \sim \mathcal{N}(0, \text{diag}(V_{1,t}, \dots , V_{M,t}))
\end{equation}
where the $i$-th row of \eqref{eq:sensor} is the information that DM obtains by observing the $i$-th landmark. The matrix $C_t$ is determined by the relative positions of landmarks with respect to the DM. However, the DM is granted freedom to tune $V_{i,t}$ to optimally allocate attention resources. That is, setting $V_{i,t}$ to its lower bound corresponds to observing the landmark $i$ with the highest possible accuracy, while setting $V_{i,t}=\infty$ means zero data rate is allocated to that landmark.   

\subsection{Related work and limitation}

Decision making under sensing resource constraints is a ubiquitous problem in autonomy and has been widely recognized by control and robotics communities. The problems of sensor selection \cite{joshi2008sensor,zhang2017sensor,Hashemi21_greedy_sensor}, sensor scheduling \cite{gupta2006stochastic,vitus2012efficient,chepuri2014sparsity}, and landmark selection for autonomous navigation (e.g., \cite{carlone2018attention}) have been actively studied in various setups. A popular approach is to impose a cardinality constraint on sensing modalities (e.g., the number of sensors/landmarks that can be simultaneously engaged). Although a cardinality constraint is conceptually simple, it may not be appropriate to model human visual attention as a discrete choice problem.


\subsection{Contribution of this paper}
Contributions of this paper are summarized as follows:

\begin{itemize}
    \item We formulate the task-dependant visual attention allocation problem under data rate restriction as an optimization problem, where the restriction is imposed by the DI from the state of the environment to the observations. The formulation is predictive and it is developed in the receding horizon fashion.
 \item Noticing that the formulated problem belongs to the class of difference-of-convex programs \cite{lipp2016variations}, we apply the convex-concave procedure and present an algorithm which finds a local minimum in polynomial time. Using ADMM, we propose a distributed algorithm which reduces the computational complexity from being cubic to linear in horizon length.
\item By numerical studies, we show that the proposed formulation is sparsity-promoting in the sense that it tends to allocate zero data rate to the uninformative landmarks.
We provide insights into the observed sparsity-promoting property by analyzing a simple special case of the proposed formulation. 
%
%
\end{itemize}


\subsection{Notation}
Throughout the paper, we use lower case boldface symbols such as $\bm{x}$ to represent random variables (r.v.), while $x$ is a realization of $\bm{x}$. The matrices are represented by upper case symbols and $\|.\|_F$ is Frobenius norm. $X_{i:j}$ is used to denote the sequence $\{X_i, X_{i+1}, \dots, X_j\}$. $h(\bm{y})\triangleq  -\int p(y) \log p(y)dy$ is used to denote the differential entropy of the r.v. $\bm{y}$. For a Gaussian r.v. $\bm{y} \sim \mathcal{N} (y, \Sigma_y)$, $h( \bm{y}) = \frac{1}{2}\log\det(2\pi e \Sigma_y)$. Mutual information between random variables $\bm{x}$ and $\bm{y}$ is denoted by $I(\bm{x};\bm{y}) \triangleq h(\bm{x})-h(\bm{x}|\bm{y})$. $[i,j]$ denotes the set $\{i, i+1,  \dots, j\}$ and $[i]\triangleq [1,i]$.

\section{Problem Formulation}
\label{sec:pre}
In this paper, we consider a DM whose task is to follow a reference trajectory $x_{1:T}^{ref}$ with minimum deviation based on the observation of known landmarks existing in the environment. We assume the DM's attention bottleneck has a data rate limit. We formulate an optimal landmark selection problem, based on which the DM strategically allocates its attention to the landmarks under this limited data rate in order to achieve the highest quality of the task.

The DM's dynamics are described by a generic non-linear discrete-time state-space model
\begin{equation}
\label{eq:dyn}
\bm{x}_{t+1}^{act}= f(\bm{x}_{t}^{act}, \bm{u}_{t}^{act}) +  \bm{w}_t, \quad \bm{w}_t \overset{i.i.d.}{\sim} \mathcal{N}(0, W),
\end{equation}
where $\bm{x}_t^{act} \in \mathbb {R}^n$ is the actual state vector (location) of the DM, with the initial distribution $\bm{x}_1^{act} \sim \mathcal{N}(0, P_{1|0})$. The DM estimates its state by performing visual measurements
of landmarks whose locations are known a~priory. By observing all the landmarks with full accuracy, the DM receives a sensor measurement
\begin{equation}
\label{eq:nl_sensor}
    \bm{y}_{t}^{act}= h(\bm{x}_{t}^{act})+\bm{v}_{t}, \quad  \bm{v}_{t} \overset{i.i.d.}{\sim} \mathcal{N}(0, \hat{V}),
\end{equation}
where the $i$-th row of $\bm{y}_{t}^{act}$ is the sensor measurement of the $i$-th landmark, and  $\hat{V} \triangleq {\rm diag}(\{\hat{V}_{i}\}_{i \in [M]})$ is the measurement noise covariance characterizing the limitation of the sensing accuracy.

Although all the measurements (\ref{eq:nl_sensor}) are always available, the DM may not have sufficient information throughput to process their information content entirely. In this case, the DM may need to reduce the data rate allocated to some of the landmarks. From  information-theoretic perspective (as detailed in this sequel), this can be done by choosing  $V_{i,t} \succeq \hat{V}_{i}$ to  each landmark $i$ at time $t$.





We assume the deviation from $x_{1:T}^{ref}$ stays small because of the feed-back controller the DM uses for the path following. Based on this assumption, the DM's dynamics can be approximated by a linearized model around $x_{1:T}^{ref}$. We drive  the approximated linear model under the following assumption.
\begin{assumption}
The state sequence $x_{1:T}^{ref}$ is a feasible target trajectory for (\ref{eq:dyn}), meaning that there exist a control input sequence $u_{1:T-1}^{ ref}$ such that $\forall t\in[T-1]$ we have $x_{t+1}^{ref} = f(x_{t}^{ref}, u_{t}^{ref})$.
\end{assumption}

Based on the feasibility of path $x_{1:T}^{ref}$, the non-linear dynamics expressed in (\ref{eq:dyn}) and (\ref{eq:sensor}) is linearized as
\begin{equation}
\label{eq:dyn-lin}
\begin{split}
    \bm{x}_{t+1} &= A_t \bm{x}_{t}+ B_t  \bm{u}_{t}+  \bm{w}_t, \quad \bm{w}_{t} \overset{i.i.d.}{\sim} \mathcal{N}(0, W),\\
    \bm{y}_{t}&= C_t \bm{x}_{t} +\bm{v}_{t}, \quad  \bm{v}_{t}, \overset{i.i.d.}{\sim} \mathcal{N}(0, V_t),
\end{split}
\end{equation}
 where $\bm{x}_{t}\triangleq \bm{x}_{t}^{act} - x_{t}^{ref}$, $\bm{u}_{t}= \bm{u}_{t}^{act} - u_{t}^{ref}$, and $\bm{y} \triangleq \bm{y}_{t}^{act}-h_i(x_{t}^{ref})$ are the deviation from the reference values.
 The linear system (\ref{eq:dyn-lin}) starts from $\bm{x}_1 \sim \mathcal{N}(-x_{1}^{ref}, P_{1|0})$, and we have $A_t := \nabla_{x} f(x,u)|_{(x_{t}^{ref}, u_{t}^{ref})}$, $B_t := \nabla_{u} f(x,u)|_{(x_{t}^{ ref}, u_{t}^{ref})}$, and $C_t=[C_{1,t}^{\top}, \dots, C_{M,t}^{\top}]^{\top}$, where $C_{i,t}:= \nabla_{x} h_{i}(x)|_{x_{t}^{ref}}$.

\subsection {Kalman Filter}
\label{sec:EKF}

 


The DM at time $k$ observes the measurement $\bm{y}_{k}$ to compute the least mean square error (MSE) prediction $\hat{\bm{x}}_{k|k-1}:= \mathbb{E}[\bm{x}_k| \bm{y}_{1:k-1}]$ and the least  MSE filtered estimate  $\hat{\bm{x}}_{k|k}:= \mathbb{E}[\bm{x}_k| \bm{y}_{1:k}]$. Using the Kalman filter (KF) associated with (\ref{eq:dyn-lin}), these estimates are computed recursively by $\hat{x}_{k|k-1}= A_{k-1} \hat{x}_{k-1|k-1}+ B_{k-1} u_{k-1}$ and $ \hat{x}_{k|k}=\hat{x}_{k|k-1}+ L_{k} (y_{k}-C_{k} \hat{x}_{k|k-1})$. Here, $L_{k}$ is the Kalman gain computed as
\begin{equation*}
    L_{k}= P_{k|k-1} C_{k}^\top (C_{k} P_{k|k-1} C_{k}^\top+ V_{k})^{-1},
\end{equation*}
where $P_{k|k-1}\triangleq \textbf{Cov}(\bm{x}_{k}- \hat{\bm{x}}_{k|k-1})$ and  $P_{k|k}\triangleq \textbf{Cov} (\bm{x}_{k}- \hat{\bm{x}}_{k|k})$ represent the predicted and filtered error covariances, respectively.  These covariances are computed from Riccati recursion 
\begin{subequations}
\label{eq:riccati}
\begin{align}
\label{eq:cov_update}
    P_{k|k}^{-1}=&P_{k|k-1}^{-1}+ C_{k}^\top V_{k}^{-1}C_{k},\\ 
    P_{k+1|k}=& A_k P_{k|k} A_k^\top+ W.
\end{align}
\end{subequations}
Here, we emphasize that $P_{k|k}$ and $P_{k+1|k}$ are controllable by the DM via the decision variable $V_k$ in our problem formulation. In what follows, we formulate an optimization problem in terms of $V_k$ and Fisher information matrices $Q_{k|k-1}\triangleq P_{k|k-1}^{-1} $ and 
$Q_{k|k}\triangleq P_{k|k}^{-1}= P_{k|k-1}^{-1}+C_{S_k}^\top V_{k}^{-1}C_{k} $.

\subsection{Control Scheme}
\label{sec:LQR}

 
Suppose the DM's task is to minimize the deviation from $x_{1:T}^{ref}$ by implementing the control policy which minimizes the quadratic control cost 
$ J_{1:T} \triangleq \sum_{t=1}^{T-1} \mathbb{E} \big[\|\bm{x}_{t+1}\|^2_Q+ \|\bm{u}_t\|^2_R]$,
with some given $R\succ 0$ and $ Q \succ 0$. Formally, this LQG control problem is formulated as
\begin{equation}
    \begin{split}
        \min_{\{\bm{u}_t\}_{t=1}^{T-1}}\quad  &  \sum_{t=1}^{T-1} \mathbb{E} [ \|\bm{x}_{t+1}\|^2_Q+ \|\bm{u}_t\|^2_R ]\\
    \end{split}
\end{equation}
subject to (\ref{eq:dyn-lin}), where the minimization is performed over the space of causal policies $\bm{u}_t=U_{t}(\bm{y}_{1:t})$. By the separation principle, the optimal controller is $\bm{u}_t=K_t 
\hat{\bm{x}}_{t|t}$, where 
\begin{align} \label{eq:opt_control}
K_t = -(B_t^\top S_{t} B_t+R)^{-1}(B_t^\top S_{t} A_t).
\end{align}
In (\ref{eq:opt_control}), $S_t$ is computed iteratively backward in time by the dynamic Riccati equation:
\begin{align*}
S_{t-1}\!=\!A_t^\top S_{t} A_t\!-\!A_t^\top\!S_{t} B_t (B_t^\top S_{t} B_t+R)^{-1}\!B_t^\top S_{t} A_t\!+\!Q,
\end{align*}
with the terminal condition $S_T=Q$. The optimal control cost is 
\begin{equation}
\label{eq:lqg_cost}
    J_{1:T}= \sum_{t=1}^T Tr(\Theta_t P_{t|t})+J^{c}_{1:T},
\end{equation}
where $\Theta_t= K_t^\top (B_t^\top S_t B_t+ R) K_t$. In (\ref{eq:lqg_cost}), $J^{c}_{1:T} \triangleq \sum_{t=1}^T Tr(S_t W)+ Tr((S_0-Q)P_{1|0})$ is a constant term which is independent of the attention allocation the DM makes, and thus it is neglected in the rest of the paper.





\section{Proposed Formulation} \label{sec:proposed_form}

\subsection{Regularization with DI}

In this paper,  we study the landmark selection problem for a receding horizon $H$. More precisely, the DM at time $t$ starts from the initial information $Q_{t|t-1}$, and seeks the optimal set of attention allocation $V_{t:t+H}$ to minimize control cost incurs from time $t$ up to $t+H$ i.e.,  $J_{t:t+H}= \sum_{k=t}^{t+H} Tr(\Theta_k Q_{k|k}^{-1})$,  while keeping the required data rate small.


As we discussed in Section~\ref{sec:intro}, the DI $I(\bm{x}_{t:t+H}\rightarrow \bm{y}_{t:t+H}| \bm{y}_{t-1})$\footnote{Here, we adopt conditional DI as in our receding horizon implementation, the random variable $\bm{y}_{1:t-1}$ is given at time step $t$.}
quantifies the required data rate to transfer the information content of $\bm{y}_{t:t+H}$ about $\bm{x}_{t:t+H}$. Based on this interpretation, we propose to regularize $J_{t:t+H}$ with $I(\bm{x}_{t:t+H}\rightarrow \bm{y}_{t:t+H}| \bm{y}_{1:t-1})$, and formulate the problem of attention allocation under data rate constraint as
\begin{equation}
\label{eq:sparse-idea}
    \begin{split}
        \min_{V_{t:t+H}} \quad & J_{t:t+H} +\beta I( \bm{x}_{t:t+H} \rightarrow \bm{y}_{t:t+H} |\bm{y}_{1:t-1}).
    \end{split}
\end{equation} 
Problem (\ref{eq:sparse-idea}) is the soft-constrained version of (\ref{eq:sensor_controller_synthesis}), where $\beta$ is the Lagrange multiplier. A greater $\beta$ places more weight on the data rate and reduces the data rate of the optimal attention allocation $V^*_{t:t+H}$. The DM only implements $V^*_{t}$ (i.e, the first element of the sequence $V^*_{t:t+H}$), and the landmark selection continues by solving (\ref{eq:sparse-idea}) again at $t+1$.





We now rewrite (\ref{eq:sparse-idea}) using information matrices $Q_{k|k-1}$ and $Q_{k|k}$.
The DI term in (\ref{eq:sparse-idea}) can be written as:
    \begin{align}
        \nonumber   
        &I(\bm{x}_{t:t+H}\rightarrow \bm{y}_{t:t+H}| \bm{y}_{1:t-1}) = \sum_{k=t}^{t+H} I(\bm{x}_{t:k};\bm{y}_{k}| \bm{y}_{1:k-1})\\ \nonumber 
        &= \sum_{k=t}^{t+H} I(\bm{x}_k;\bm{y}_{k}|  \bm{y}_{1:k-1})+
        I(\bm{x}_{t:k-1};\bm{y}_{k}|\bm{x}_k,  \bm{y}_{1:k-1}) \\ \nonumber 
        &\overset{(a)}{=}\sum_{k=t}^{t+H} \!I(\bm{x}_k;\bm{y}_{k}| \bm{y}_{1:k-1})=\sum_{k=t}^{t+H}  h(\bm{x}_k|  \bm{y}_{1:k-1})-h(\bm{x}_k|  \bm{y}_{1:k})\\  \label{eq:DI}
        &=  \sum_{k=t}^{t+H} \frac{1}{2} \log \det Q_{k|k}- \frac{1}{2}\log\det Q_{k|k-1}.
    \end{align}
In step (a), we used the fact that $\bm{x}_{t:k-1}$ and $\bm{y}_k$ are conditionally independent given $\bm{x}_k$.




Introducing a change of variables $U_k\triangleq V_k^{-1}$ and substituting (\ref{eq:DI}), (\ref{eq:sparse-idea}) becomes
\begin{subequations}
\label{eq:exp-prob}
\begin{align}
\nonumber
    \min \quad &\sum_{k=t}^{t+H}Tr(\Theta_k Q_{k|k}^{-1})\\ \label{eq:obj}
     & \quad + \frac{\beta}{2}(\log\det Q_{k|k}-\log\det Q_{k|k-1})\\ \label{eq:cons_b}
    \text{s.t.} \quad 
    & Q_{k|k}= Q_{k|k-1} + C_{k}^\top U_{k} C_{k},\\ \label{eq:cons_c}
    & Q_{k|k-1}^{-1} = A_{k-1} Q_{k-1|k-1}^{-1} A_{k-1}^\top+ W,\\ \label{eq:cons_d}
    & U_k \preceq \hat{V}^{-1},
\end{align}
\end{subequations}
where the decision variables are $Q_{k|k}$, $Q_{k|k-1}$, and $U_k$ for $k\in[t,t+H]$. The constraints (\ref{eq:cons_b}) and (\ref{eq:cons_d}) are imposed for $ k \in [t,t+H]$ while the constraint (\ref{eq:cons_c}) is imposed for all  $ k \in [t+1, t+H]$ with the initial condition $Q_{t|t-1}=P_{t|t-1}^{-1}$.

In (\ref{eq:exp-prob}), all the constraints except (\ref{eq:cons_c}) are convex. In the following proposition, we show (\ref{eq:cons_c}) can be replaced  by a convex inequality constraint without changing the nature of the problem. More precisely, the following problem can be solved instead of (\ref{eq:exp-prob}).
\begin{subequations}
\label{eq:relaxed_prob}
\begin{align}
    \min \quad & \mathrm{(\ref{eq:obj})} \\ 
    \text{s.t.} \quad \label{eq:not_LMI}
    & Q_{k|k-1}^{-1} \succeq A_{k-1} Q_{k-1|k-1}^{-1} A_{k-1}^\top+ W,\\ & \mathrm{(\ref{eq:cons_b})}\; \text{and}\; \mathrm{(\ref{eq:cons_d})}.
\end{align}
\end{subequations}

\begin{proposition}
\label{prop:one}
Let $J^*_1$ and $(U^*_k, Q^*_{k|k-1}, Q^*_{k|k})$ be the optimal value and optimal solution of problem (\ref{eq:relaxed_prob}), respectively. Then, the optimal value of (\ref{eq:exp-prob}) is $J^*_2=J^*_1$ and the optimal solution of (\ref{eq:exp-prob}) is $(U_k^*,Q^{**}_{k|k-1},Q^{**}_{k|k})$, where $Q^{**}_{k|k-1}$ and $Q^{**}_{k|k}$ are recursively defined by
\begin{subequations}
\begin{align}
    Q^{**}_{k|k}=& Q^{**}_{k|k-1}+  C_k^\top U^{*}_k C_k,\\
    Q_{k|k-1}^{**-1}=& A_{k-1} Q_{k-1|k-1}^{**-1}A_{k-1}^\top+ W,
\end{align}
\end{subequations}
starting from $Q^{**}_{t|t-1}=Q^{*}_{t|t-1}$.
\end{proposition}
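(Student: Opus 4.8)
The plan is to exploit the obvious feasible-set containment in one direction and an explicit monotone construction in the other. Since any point satisfying the equality (\ref{eq:cons_c}) automatically satisfies the relaxed inequality (\ref{eq:not_LMI}), the feasible set of (\ref{eq:exp-prob}) is contained in that of (\ref{eq:relaxed_prob}), which gives $J_1^* \leq J_2^*$ immediately. The substantive direction is the reverse inequality, which I would establish by showing that the proposed point $(U_k^*, Q_{k|k-1}^{**}, Q_{k|k}^{**})$ is feasible for (\ref{eq:exp-prob}) and attains an objective no larger than $J_1^*$. Feasibility is built into the construction: the recursions defining $Q^{**}$ are exactly (\ref{eq:cons_b}) and (\ref{eq:cons_c}), while $U_k^*$ inherits (\ref{eq:cons_d}) from being feasible for (\ref{eq:relaxed_prob}).

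The key structural step is a monotonicity claim: for every $k$,
\begin{equation*}
Q_{k|k-1}^{**} \succeq Q_{k|k-1}^{*} \quad \text{and} \quad Q_{k|k}^{**} \succeq Q_{k|k}^{*}.
\end{equation*}
I would prove this by induction along the filter/predict chain, starting from the identity $Q_{t|t-1}^{**} = Q_{t|t-1}^{*}$. The filter update is order preserving because adding the common term $C_k^\top U_k^* C_k$ preserves the Loewner order, so $Q_{k|k-1}^{**} \succeq Q_{k|k-1}^{*}$ yields $Q_{k|k}^{**} \succeq Q_{k|k}^{*}$. For the prediction step I would pass to covariances: writing $P \triangleq Q^{-1}$, the map $P \mapsto A_{k-1} P A_{k-1}^\top + W$ is monotone, and (\ref{eq:not_LMI}) reads $P_{k|k-1}^{*} \succeq A_{k-1} P_{k-1|k-1}^{*} A_{k-1}^\top + W$, so the inductive hypothesis $P_{k-1|k-1}^{**} \preceq P_{k-1|k-1}^{*}$ propagates to $P_{k|k-1}^{**} \preceq P_{k|k-1}^{*}$, i.e. $Q_{k|k-1}^{**} \succeq Q_{k|k-1}^{*}$.

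It then remains to show that enlarging the information matrices in this way does not raise the objective. The trace terms cooperate: since $\Theta_k \succeq 0$ and $Q \mapsto Q^{-1}$ reverses the order, $Q_{k|k}^{**} \succeq Q_{k|k}^{*}$ gives $Tr(\Theta_k Q_{k|k}^{**-1}) \leq Tr(\Theta_k Q_{k|k}^{*-1})$. The delicate term is the DI contribution $\frac{\beta}{2}(\log\det Q_{k|k} - \log\det Q_{k|k-1})$, whose two log-det pieces individually move in the wrong direction; I would handle this by combining them through (\ref{eq:cons_b}) into $\log\det(I + Q_{k|k-1}^{-1} C_k^\top U_k^* C_k)$ and arguing that $g(Q) \triangleq \log\det(Q + M) - \log\det Q$, with $M \triangleq C_k^\top U_k^* C_k \succeq 0$, is non-increasing in the Loewner order: its gradient $(Q+M)^{-1} - Q^{-1} \preceq 0$ pairs non-positively with the PSD increment $Q_{k|k-1}^{**} - Q_{k|k-1}^{*}$ along the segment joining the two. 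I anticipate this combined-monotonicity argument for the log-det difference to be the main obstacle, precisely because the naive term-by-term bound fails. With it in hand the double-starred objective is at most $J_1^*$; being feasible for (\ref{eq:exp-prob}) it is also at least $J_2^* \geq J_1^*$, which forces $J_1^* = J_2^*$ and certifies optimality of the constructed point for (\ref{eq:exp-prob}).
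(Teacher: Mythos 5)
Your proof is correct. Note that the paper does not actually supply its own argument for this proposition --- it only cites an external reference for ``a similar relaxation'' --- so there is no in-paper proof to compare against; your write-up is a complete, self-contained version of the standard tightening argument that such references use. The two load-bearing steps are both handled properly: the Loewner-order induction along the filter/predict chain (using that $P\mapsto A_{k-1}PA_{k-1}^\top+W$ is monotone and that inversion reverses the order on the PD cone), and the observation that the DI term must be treated as the single quantity $\log\det(I+Q_{k|k-1}^{-1}C_k^\top U_k^* C_k)$ rather than term by term. Your gradient argument for the monotonicity of $g(Q)=\log\det(Q+M)-\log\det Q$ is valid since the segment joining $Q_{k|k-1}^{*}$ and $Q_{k|k-1}^{**}$ stays in the PD cone and $\langle (Q+M)^{-1}-Q^{-1},\,\Delta\rangle\le 0$ for $\Delta\succeq 0$; equivalently one can note $\det(I+M^{1/2}Q^{-1}M^{1/2})$ is monotone in $Q^{-1}$ via Weyl's inequality. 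The only cosmetic caveat is that since $Q_{t|t-1}$ is fixed at $P_{t|t-1}^{-1}$ in both problems, the base case of your induction is an identity rather than an inequality, which only strengthens the argument.
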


\begin{proof}
See \cite[Proposition~1]{Jung2021optimal} which provides a proof for similar relaxation.
\end{proof}

Convexity of (\ref{eq:not_LMI}) can be seen by noticing that it can be rewritten as
an equivalent linear matrix inequality. Thus (\ref{eq:relaxed_prob}) becomes
\begin{subequations}
\label{eq:prob_new}
\begin{align}
\nonumber
    \min \quad & \sum_{k=t}^{t+H} Tr(\Theta_k Q_{k|k}^{-1}) \\ \label{eq:prob_new_obj}
    & \quad + \frac{\beta}{2} (\log \det Q_{k|k}- \log\det Q_{k|k-1})\\ \label{eq:prob_new_cons1}
    \text{s.t.} \quad 
    & Q_{k|k}= Q_{k|k-1}+  C_{k}^\top U_{k} C_{k},\\  \label{eq:prob_new_cons3}
    & \begin{bmatrix}
    Q_{k|k-1}& Q_{k|k-1} A_{k-1}& Q_{k|k-1}W^{\frac{1}{2}}\!\\
    A_{k-1}^\top Q_{k|k-1}& Q_{k-1|k-1}& 0\\
    W^{\frac{1}{2}} Q_{k|k-1}& 0 & I
    \end{bmatrix} \!\!\succeq 0,\\ \label{eq:prob_new_cons4}
    & U_{k} \preceq \hat{V}^{-1}.
\end{align}
\end{subequations}
Unfortunately,  $\log\det(Q_{k|k})$ in  (\ref{eq:prob_new_obj}) is a non-convex function and currently it is not known to the authors if (\ref{eq:prob_new}) can be formulated as a convex optimization problem. 
In the Section~\ref{sec:CCP}, we developed an algorithm that finds a local minimum of (\ref{eq:prob_new}) in polynomial time.

\subsection{Algorithm}
\label{sec:CCP}

Problem (\ref{eq:prob_new}) is an instance of difference of convex (DC) programming for which we can use CCP algorithm \cite{lipp2016variations} to find a local minimum. The CCP is an iterative algorithm which starts from an initial feasible solution of the DC program at iteration $j=0$. At iteration $j+1$, the non-convex terms of the DC problem are over-approximated by their affine approximation computed  by linearization around the solution of iteration $j$.  In each iteration, the approximate program is convex and can be solved efficiently.
The process is repeated until a locally optimum solution is found.
It is shown in \cite{lipp2016variations} that if the initial iteration of CCP algorithm is feasible all subsequent iterations will be  feasible, and the algorithm monotonically converges to a local minimum.

 The only non-convex term in (\ref{eq:prob_new}) is $g_0(Q_{k|k})=\log\det Q_{k|k}$ which is upper-approximated by the affine $\hat{g}_0 = \log\det Q_{k|k,j}+Tr(Q_{k|k,j}^{-1}Q_{k|k})-Tr(Q_{k|k,j}^{-1} Q_{k|k,j})$. Hence, at the $j+1$-th CCP iteration, we solve
\begin{subequations}
\label{eq:LMI}
\begin{align}
\nonumber
    \min \quad & \sum_{k=t}^{t+H} Tr(\Theta_k Q_{k|k}^{-1}) \\
    & \quad + \frac{\beta}{2} ( Tr(Q_{k|k,j}^{-1}Q_{k|k}) - \log\det Q_{k|k-1})\\
    \text{s.t.} \quad & \mathrm{(\ref{eq:prob_new_cons1})}-\mathrm{(\ref{eq:prob_new_cons4})}.
\end{align}
\end{subequations}
with decision variables $U_{k}$, $ Q_{k|k}$, and $ Q_{k|k-1}$ for $k \in [t, t+H]$. Problem (\ref{eq:LMI}) is a max-det problem, whose computational complexity is typically $\mathcal{O}(H^3)$.

For problems with long time horizons, the computational complexity of $\mathcal{O}(H^3)$ is not acceptable for online implementations of the algorithm. In Section~\ref{sec:ADMM}, we exploit the sparsity pattern in (\ref{eq:LMI}) and propose a distributed CCP algorithm.




\section{Distributed Algorithm Using ADMM}
\label{sec:ADMM}
In this section, we derive a distributed algorithm for problem (\ref{eq:LMI}) by exploiting the fact that variables at each step are only coupled with variables at the previous and the next time steps. We propose a method to resolve the coupling between different time steps and derive an algorithm using alternating direction method of multipliers (ADMM) \cite{boyd2011distributed}. We show this algorithm facilitates solving this problem with the time complexity $\mathcal{O}(H)$.

\subsection{Review of ADMM}
The ADMM is applicable to the constrained problem
\vspace{-0.2cm}
\begin{equation}
\label{eq:admm_form}
    \begin{split}
    \min \;\;& f(X)+g(Z)\\
    \text{s.t.}\;\; & X=Z,
    \end{split}
\end{equation}
with variable $X \in \mathbb{S}^n,$ and $ Z \in \mathbb{S}^n$, where $f$ and $g$ are convex functions. The augmented Lagrangian with scaled dual variable $D \in \mathbb{S}^{n}$ for problem (\ref{eq:admm_form}) is defined as  
\begin{equation*}
    L_{\rho}(X, Z, D)= f(X)+g(Z)+\frac{\rho}{2}\|X-Z+D\|_{F}^2,
\end{equation*}
where $\rho \in \mathbb{R}^{+}$ is a penalty parameter. The $l+1$-th ADMM iteration for this problem is 
\begin{align*}
    &X^{l+1}:=\argmin_{X} \big( f(X)+\frac{\rho}{2}\|X-Z^l+D^l\|_F^2 \big),\\
    &Z^{l+1}:= \argmin_Z \big( g(Z)+\frac{\rho}{2}\|X^{l+1}-Z+D^l\|_F^2 \big),\\
    &D^{l+1}:=D^l+X^{l+1}-Z^{l+1},
\end{align*}
where $X^l$ is the value of $X$ after iteration $l$. The $X$-update and $Z$-update are the evaluation of proximal operators associated with functions $f$ and $g$, respectively. 
The ADMM achieves linear convergence under some mild assumptions \cite{hong2017linear}. In practice, it has also been demonstrated that ADMM generates solutions with moderate precision within few tens of iterations \cite{boyd2011distributed}.

\subsection{ADMM for Attention Allocation}
In this section, we present an algorithm that solves the attention allocation problem (\ref{eq:LMI}) with the time complexity $\mathcal{O}(H)$.
\begin{theorem}
The attention allocation problem (\ref{eq:LMI}) can be solved with the time complexity $\mathcal{O}(H)$, where $H$ is the receding horizon.
\end{theorem}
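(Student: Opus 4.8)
The plan is to recast problem (\ref{eq:LMI}) into the consensus form (\ref{eq:admm_form}) so that each ADMM subproblem splits into $H+1$ independent subproblems, each over a constant number of fixed-size ($n\times n$) matrices and hence solvable at cost independent of $H$. First I would note two structural facts about (\ref{eq:LMI}): the objective is additively separable across the time index $k$, and among the constraints (\ref{eq:prob_new_cons1})--(\ref{eq:prob_new_cons4}) the only one that couples distinct time steps is the LMI (\ref{eq:prob_new_cons3}), which links $Q_{k|k-1}$ with $Q_{k-1|k-1}$ through the fixed data $A_{k-1}$ and $W$. The constraints (\ref{eq:prob_new_cons1}) and (\ref{eq:prob_new_cons4}) are local to step $k$. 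Thus the coupling is ``banded'' (nearest-neighbour in time), which is exactly the pattern ADMM can exploit.

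Second, I would break the inter-step coupling by introducing, for each $k$, an auxiliary local copy $\bar{Q}_{k-1|k-1}$ of the filtered information matrix, so that the LMI (\ref{eq:prob_new_cons3}) at step $k$ references $\bar{Q}_{k-1|k-1}$ rather than the shared $Q_{k-1|k-1}$. Appending the consensus equalities $\bar{Q}_{k-1|k-1}=Q_{k-1|k-1}$ then makes every remaining constraint self-contained within a single time step. Collecting the now genuinely local variables $(U_k,Q_{k|k},Q_{k|k-1},\bar{Q}_{k-1|k-1})$ into a stacked block $X$ and the consensus copies into $Z$, problem (\ref{eq:LMI}) takes the form (\ref{eq:admm_form}) with $f(X)$ the sum of the per-step objectives plus the indicators of the per-step constraints, and $g(Z)$ the indicator of the consensus subspace.

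Third, I would verify that the ADMM iterates decouple. Because the quadratic penalty $\tfrac{\rho}{2}\|X-Z+D\|_F^2$ is itself separable across $k$, the $X$-update reduces to $H+1$ independent max-det problems, each over a constant number of $n\times n$ variables and therefore solvable in $\mathcal{O}(1)$ with respect to $H$; summing over $k$ gives an $X$-update cost of $\mathcal{O}(H)$. The $Z$-update is a projection onto the consensus subspace, realized as a closed-form averaging of the few copies associated with each step and costing $\mathcal{O}(1)$ per step, while the dual update of $D$ is elementwise. Hence each full ADMM iteration costs $\mathcal{O}(H)$. Finally, invoking the linear convergence of ADMM under the assumptions cited after (\ref{eq:admm_form}) (so that a fixed target accuracy is reached in a number of iterations depending only on the contraction factor, not on $H$), I would multiply the $\mathcal{O}(H)$ per-iteration cost by this horizon-independent iteration count to obtain the claimed $\mathcal{O}(H)$ total complexity.

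The hard part will be arranging the variable splitting so that \emph{every} cross-step constraint is converted into a purely local constraint plus a consensus equality, while keeping each resulting local subproblem a standard max-det/SDP whose proximal update admits an efficient, horizon-independent solution. In particular, care is needed so that the coupled LMI (\ref{eq:prob_new_cons3}), which mixes $Q_{k|k-1}$, $Q_{k-1|k-1}$, $A_{k-1}$ and $W$, becomes a self-contained step-$k$ constraint after introducing $\bar{Q}_{k-1|k-1}$, and so that the penalized per-step $X$-minimization is genuinely of fixed size; verifying that the consensus reformulation is equivalent to (\ref{eq:LMI}) and that its subproblems remain well-posed is where the real work lies.
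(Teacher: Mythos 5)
Your proposal is correct and follows essentially the same route as the paper: duplicate variables to break the nearest-neighbour coupling caused by the LMI (\ref{eq:prob_new_cons3}), cast the result in the consensus form (\ref{eq:admm_form}), and observe that the $X$-, $Z$-, and dual updates all separate across time steps, giving $\mathcal{O}(H)$ cost per ADMM iteration. The only cosmetic difference is that the paper duplicates both $Q_{k|k-1}$ and $Q_{k|k}$ into slack variables $S_{k|k-1}, S_{k|k}$ and places the coupled LMI in the $Z$-block (so the $Z$-update is a per-pair SDP projection rather than a consensus averaging), whereas you keep the LMI local to the $X$-block; both splittings yield the same linear scaling in $H$.
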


\begin{proof}
The proof is based on constructing a  problem equivalent to (\ref{eq:LMI}) which is in the ADMM form  (\ref{eq:admm_form}), and deriving a set of ADMM updates that solves the constructed problem in $\mathcal{O}(H)$ time. 

By introducing the slack variables $S_{k|k-1} \succeq 0$ and $S_{k|k}\succeq 0$ for $k\in [t,t+H]$,  problem (\ref{eq:LMI}) can be rewritten as
\begin{subequations}
\label{eq:ADMM}
\begin{align}
\nonumber
    \min \; & \sum_{k=t}^{t+H} Tr(\Theta_k Q_{k|k}^{-1})\\ \label{eq:ADMM_obj} 
    & \quad +\frac{\beta}{2} ( Tr( Q_{k|k,j}^{-1} Q_{k|k})- \log\det Q_{k|k-1})\\ \label{eq:ADMM_cons_b}
    \text{s.t.} \quad  
    & Q_{k|k}= Q_{k|k-1}+ C_{k}^\top U_{k} C_{k}, 
    \; \forall k\in [t,t+H], \\  \nonumber
    & \begin{bmatrix}
    S_{k|k-1}& S_{k|k-1} A_{k-1}& S_{k|k-1}W^{\frac{1}{2}}\!\\
    \!A_{k-1}^\top S_{k|k-1}& S_{k-1|k-1}& 0\\
    W^{\frac{1}{2}} S_{k|k-1}& 0 & I
    \end{bmatrix}\!\! \succeq 0, \\ \label{eq:ADMM_cons_c}
    & \forall k \in [t+1,t+H] \ 
    \text{and}\  S_{t|t-1} = P_{t|t-1}^{-1}, \\ \label{eq:ADMM_cons_d}
    & U_{k} \preceq \hat{V}^{-1}, \quad \forall k\in[t, t+H],\\
    &Q_{k|k-1}= S_{k|k-1}, \quad \forall k\in[t, t+H],\\
    &Q_{k|k}= S_{k|k}, \quad \forall k\in[t, t+H],
\end{align}
\end{subequations}
where the variables are $U_k$, $Q_{k|k-1}$, $ Q_{k|k}$, $S_{k|k}$, and $S_{k|k-1}$ for $k \in [t,t+H]$. From the construction, it is clear that (\ref{eq:LMI}) and (\ref{eq:ADMM}) share the same optimum value and optimizers.

The problem (\ref{eq:ADMM}) is in the ADMM form (\ref{eq:admm_form}) with $X$ denoting the variables for $U_{k}$, $Q_{k|k}$, and $Q_{k|k-1}$, and  $Z$ denoting the variables for $S_{k|k}$ and $S_{k|k-1}$. $f(X)$ is the sum of the objective function in (\ref{eq:ADMM_obj}) and the indicator functions for (\ref{eq:ADMM_cons_b})  and (\ref{eq:ADMM_cons_d}). $g(Z)$ is the indicator function of (\ref{eq:ADMM_cons_c}).

We now construct the ADMM iterations as follows: The  $X$-update step of ADMM  at iteration $l+1$ is given by solving 
\begin{equation}
\label{eq:first_update}
\begin{split}
    \min \quad & \sum_{k=t}^{t+H} \big [ Tr(\Theta_k Q_{k|k}^{-1})\\ &\quad +\frac{\beta}{2} ( Tr(Q_{k|k,j}^{-1} Q_{k|k})- \log\det Q_{k|k-1})\\ 
     & \quad +\frac{\rho}{2} \|Q_{k|k-1} - S_{k|k-1}^l+ D_{k|k-1}^l\|_F^2\\ 
     &\quad +\frac{\rho}{2} \|Q_{k|k}-S_{k|k}^l+ D_{k|k}^l \|_F^2 \big]\\ 
    \text{s.t.} \quad & \mathrm{(\ref{eq:ADMM_cons_b})}\   \text{and} \  \mathrm{(\ref{eq:ADMM_cons_d})},
\end{split}
\end{equation}
with variables $U_k, Q_{k|k-1}$, and  $Q_{k|k}$ for $k\in [t,t+H]$, where $D_{k|k-1}$ and $D_{k|k}$ denote the dual variables for $Q_{k|k-1}=S_{k|k-1}$ and $Q_{k|k}=S_{k|k}$, respectively. This problem is separable for each time step $k$ and hence the computation is parallelizable. 
Let $Q^{l+1}_{k|k-1}$ and $ Q^{l+1}_{k|k}$ for $k \in [t,t+H]$ be the optimal solution of (\ref{eq:first_update}). The  $Z$-update is given by solving 
\begin{equation}
\label{eq:second_update}
\begin{split}
    \min \quad & \sum_{k=t}^{t+H} \big[ \|Q_{k|k-1}^{l+1}- S_{k|k-1}+ D_{k|k-1}^l\|_F^2\\
    & \quad + \|Q_{k|k}^{l+1}-S_{k|k}+ D_{k|k}^l \|_F^2 \big]\\ 
    \text{s.t.} \quad  & \mathrm{(\ref{eq:ADMM_cons_c})} ,
\end{split}
\end{equation}
with variables $S_{k|k-1}$ and $S_{k|k}$ for $k \in [t,t+H]$. Problem (\ref{eq:second_update}) is separable for each pair of $(S_{k|k-1},S_{k-1|k-1})$. Hence, the $Z$-update, like $X$-update, scales linearly with $H$. 
Let $S_{k|k-1}^{l+1}$ and $ S_{t|t}^{l+1}$ for $k \in [t:t+H]$ be the optimal solution of (\ref{eq:second_update}). The dual-update of ADMM algorithm is separated for each $k\in[t, t+H]$ and is given by
\begin{align*}
    D_{k|k-1}^{l+1}&=D_{k|k-1}^{l}+ Q_{k|k-1}^{l+1}-S^{l+1}_{k|k-1},\\
    D_{k|k}^{l+1}&=D_{k|k}^{l}+ Q_{k|k}^{l+1}-S^{l+1}_{k|k},
\end{align*}
The dual update also scales linearly with $H$. This completes the proof. 
\end{proof}

\section{Simulations}
\label{sec:simulation}

\begin{figure*}[t!]
    \centering
    %
    \subfloat[\label{subfig:traj_beta16}Trajectory of the robot: $\beta=18$]{\makebox[42mm][c]{\includegraphics[trim = 0.2cm 0cm 0.8cm 0.6cm, clip=true,
    width=4.2cm]{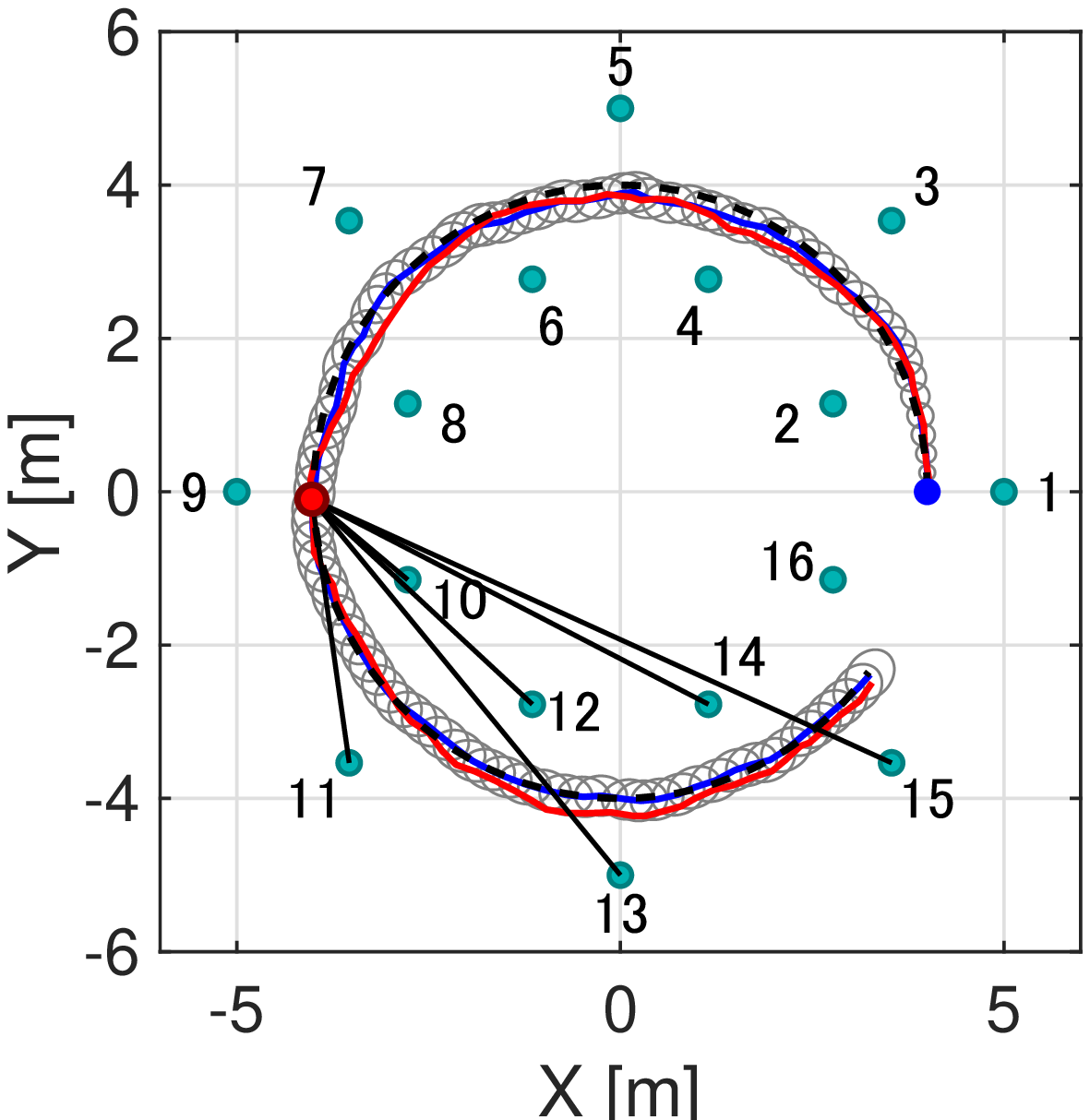}
    \label{fig:traj_beta16}}} \quad 
    \subfloat[\label{subfig:cont_beta16}Data rate allocation: $\beta=18$]{\makebox[38mm][c]{\includegraphics[trim = 0.2cm 0cm 0.8cm 0.6cm, clip=true,
    width = 3.8cm]{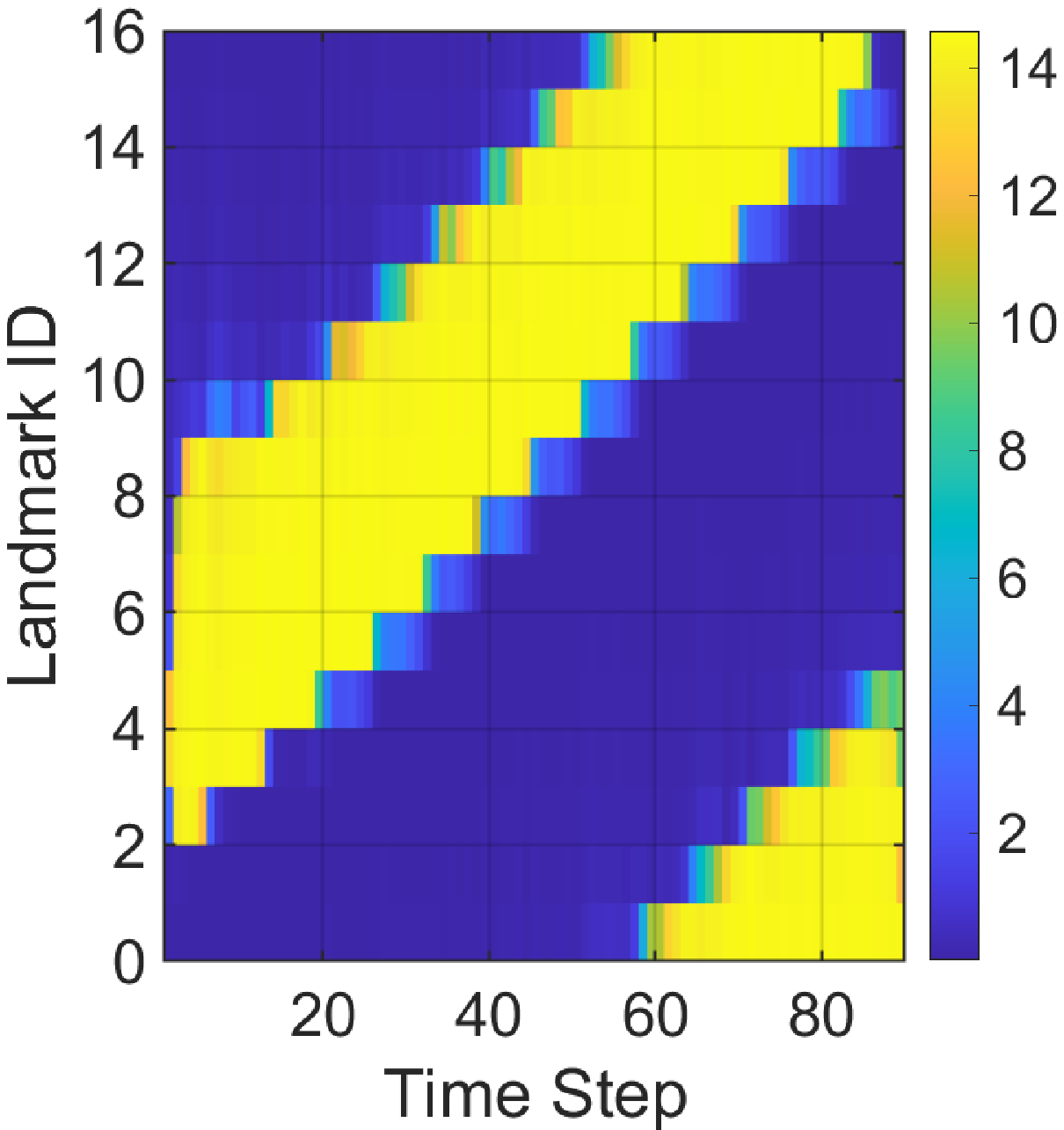}
    \label{fig:cont_beta16}}} \qquad
    \subfloat[\label{subfig:traj_beta28}Trajectory of the robot: $\beta=32$]{\makebox[42mm][c]{\includegraphics[trim = 0.2cm 0cm 0.8cm 0.5cm, clip=true,
    width=4.2cm]{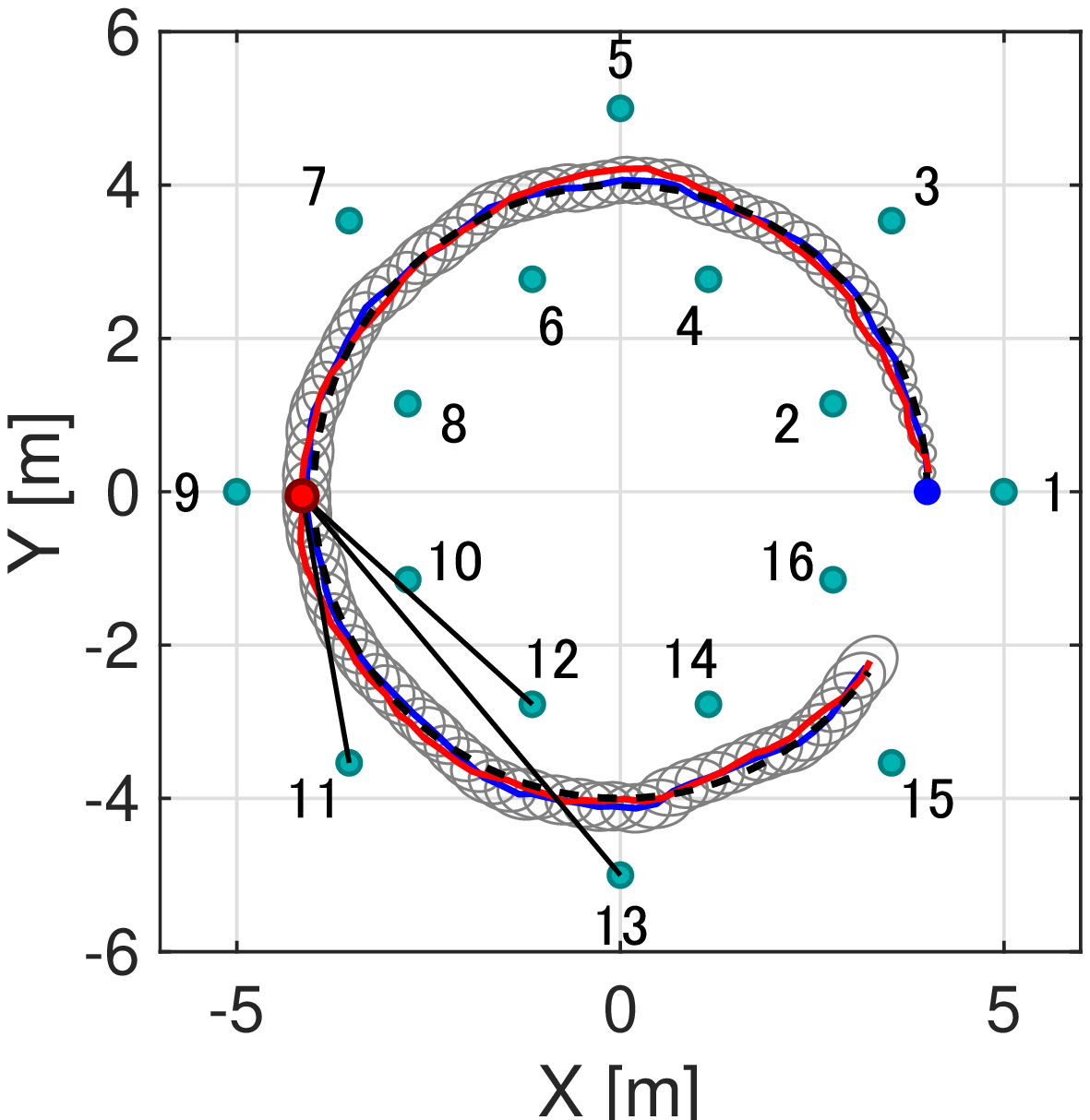}
    \label{fig:traj_beta28}}} \quad 
    \subfloat[\label{subfig:cont_beta28}Data rate allocation: $\beta=32$]{\makebox[38mm][c]{\includegraphics[trim = 0.2cm 0cm 0.8cm 0.6cm, clip=true,
    width = 3.8cm]{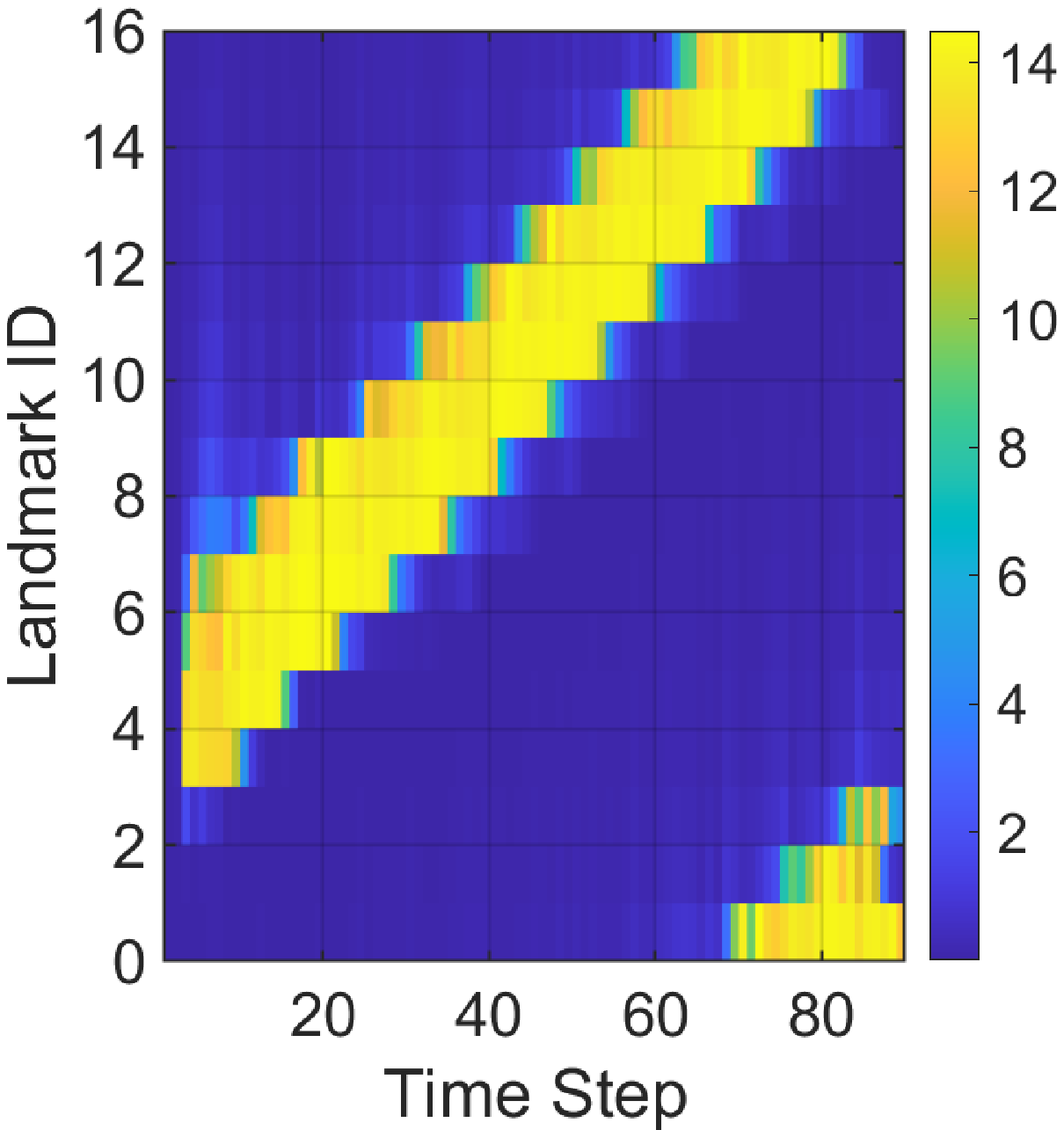}
    \label{fig:cont_beta28}}} \quad
    \caption{Results of the proposed algorithm with $H=10$, where (a)-(b) $\beta = 18$ and (c)-(d) $\beta = 32$. 
    (a) and (c): The reference trajectory is depicted as the black dashed line, while the actual robot position is shown with the red line. The trajectory of mean and the covariance ellipses representing $90\,\%$ certainty regions estimated through KF are shown with a blue line and gray ellipses, respectively. 
    The 16 landmarks are indicated by green dots with their ID.
    The red dot that appears on the left is the actual robot position at $k=50$, with black lines connecting the robot and the selected landmarks. The initial position of the robot is shown with the blue dot. (b) and (d): The allocated data rate $U_i$ for each landmark $i\in [16]$.}
    \label{fig:sim_multi_beta}
\end{figure*}

\begin{figure}[t!]
    \centering
    \subfloat[\label{subfig:traj_greedy}Trajectory of the robot]{\makebox[42mm][c]{\includegraphics[trim = 0.2cm 0cm 0.8cm 0.6cm, clip=true,
    width=4.2cm]{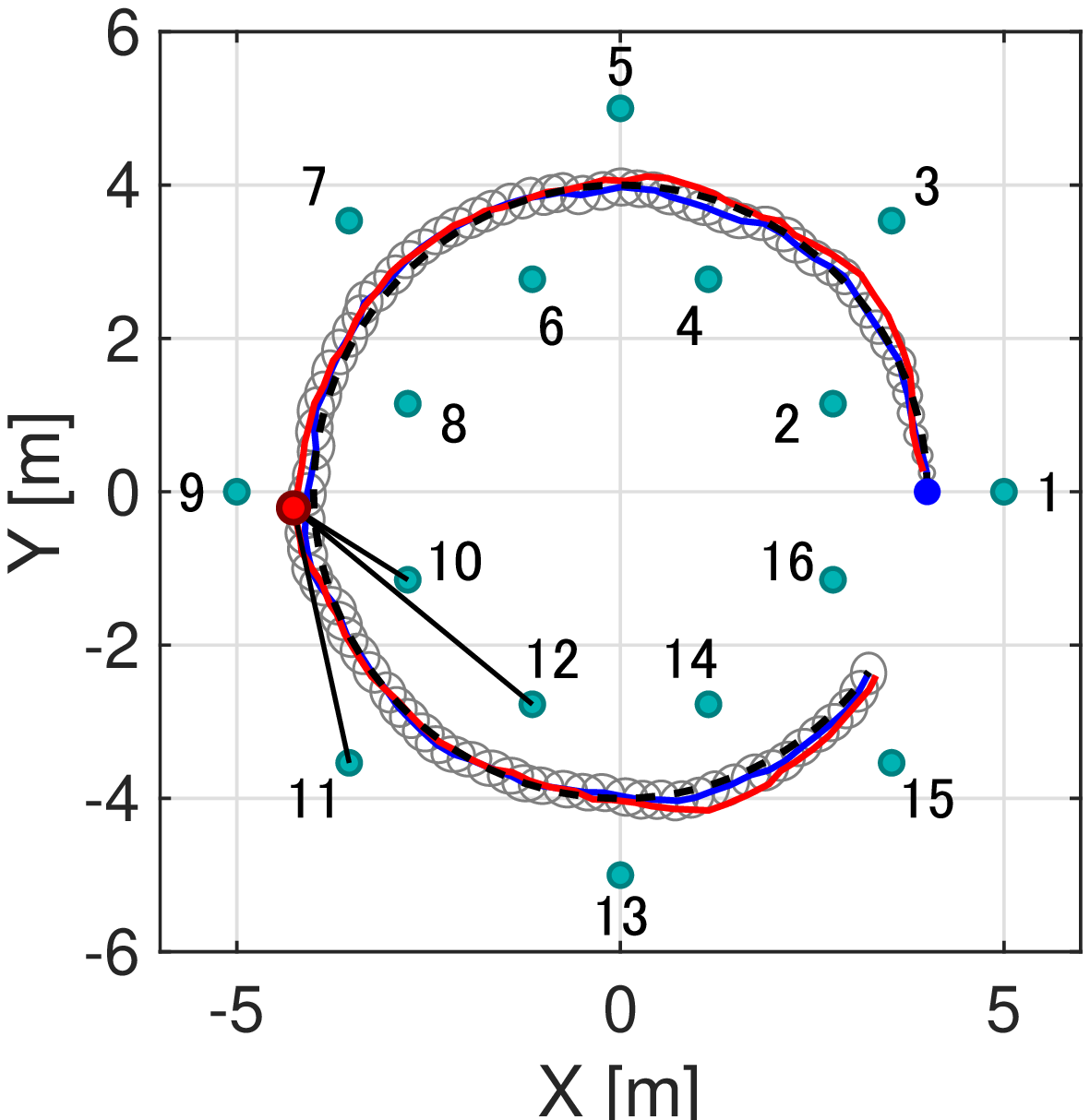}
    \label{fig:traj_RH1}}} \quad 
    \subfloat[\label{subfig:cont_greedy}Selected landmarks]{\makebox[38mm][c]{\includegraphics[trim = 0.2cm 0cm 0.8cm 0.6cm, clip=true,
    width = 3.8cm]{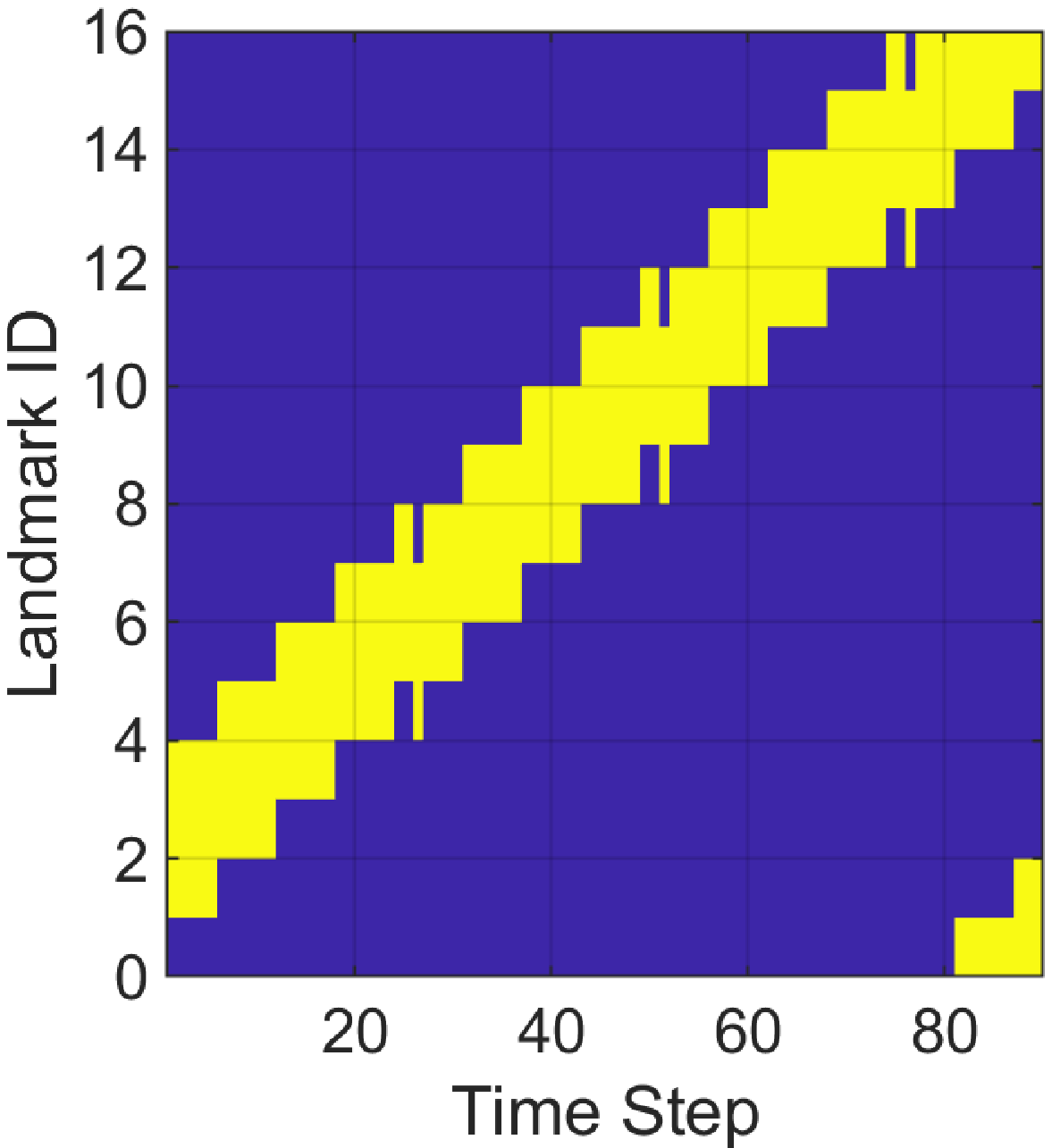}
    \label{fig:cont_RH1}}}
    \caption{Results of the greedy algorithm. Every time step, the algorithm selects three landmarks.}
    \label{fig:sim_greedy}
\end{figure}

\begin{figure}[t!]
    \centering
    \subfloat[\label{subfig:traj_HR}Trajectory of the robot]{\makebox[42mm][c]{\includegraphics[trim = 0.2cm 0cm 0.8cm 0.6cm, clip=true,
    width=4.2cm]{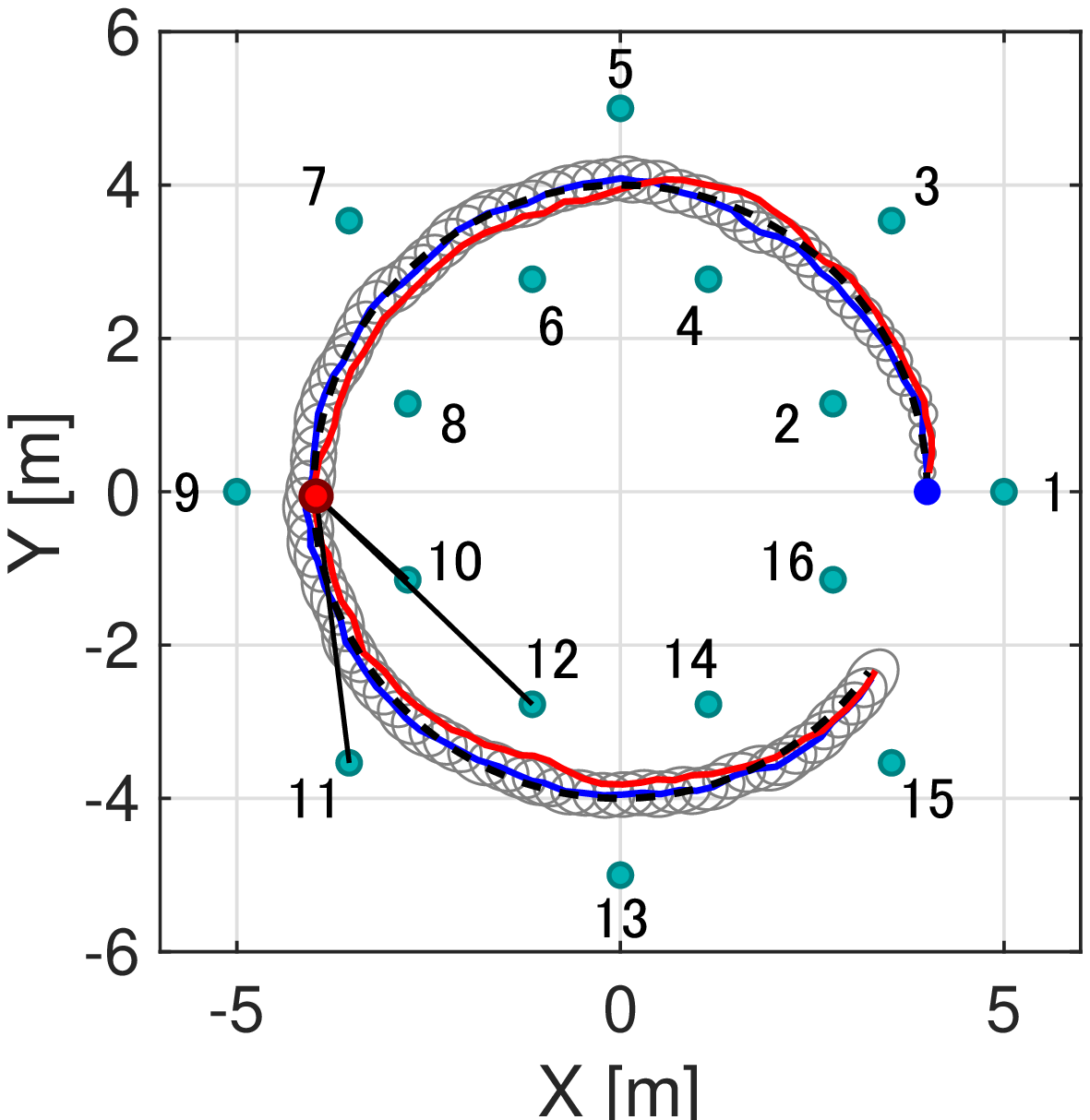}
    \label{fig:traj_greedy}}} \quad 
    \subfloat[\label{subfig:cont_HR}Data rate allocation]{\makebox[38mm][c]{\includegraphics[trim = 0.2cm 0cm 0.8cm 0.6cm, clip=true,
    width = 3.8cm]{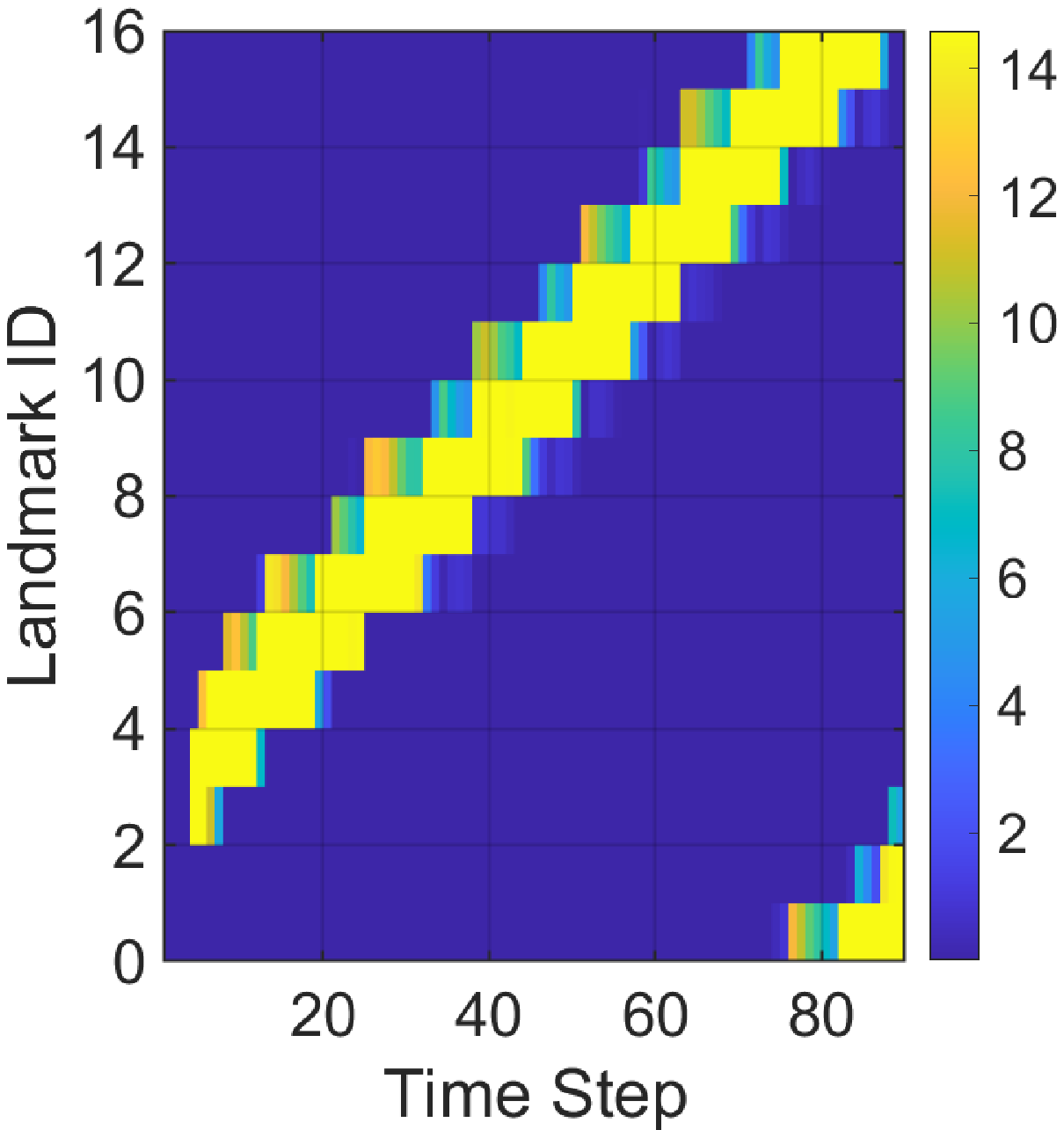}
    \label{fig:cont_greedy}}}
    \caption{Results of the proposed algorithm with $\beta = 2.5$, $H=0$ and $\hat V_i^{-1}~=~14.6$.}
    \label{fig:sim_RH0}
\end{figure}

\begin{figure}[t!]
    \centering
    \subfloat[\label{subfig:large_U_traj}Trajectory of the robot]{\makebox[42mm][c]{\includegraphics[trim = 0.2cm 0cm 0.8cm 0.6cm, clip=true,
    width=4.2cm]{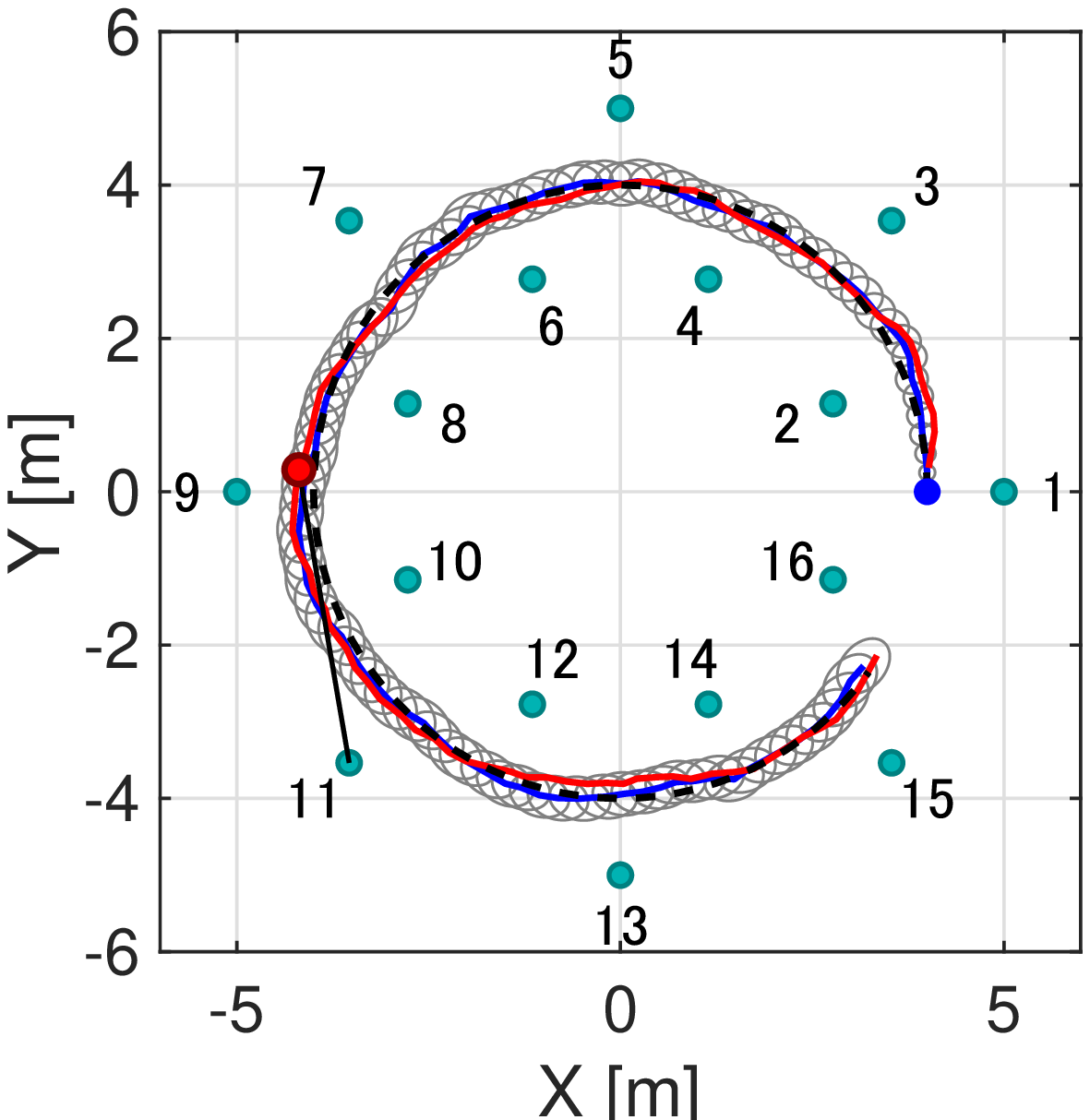}
    \label{fig:large_U_traj}}} \quad 
    \subfloat[\label{subfig:large_U_cont}Data rate allocation]{\makebox[38mm][c]{\includegraphics[trim = 0.2cm 0cm 0.8cm 0.6cm, clip=true,
    width = 3.8cm]{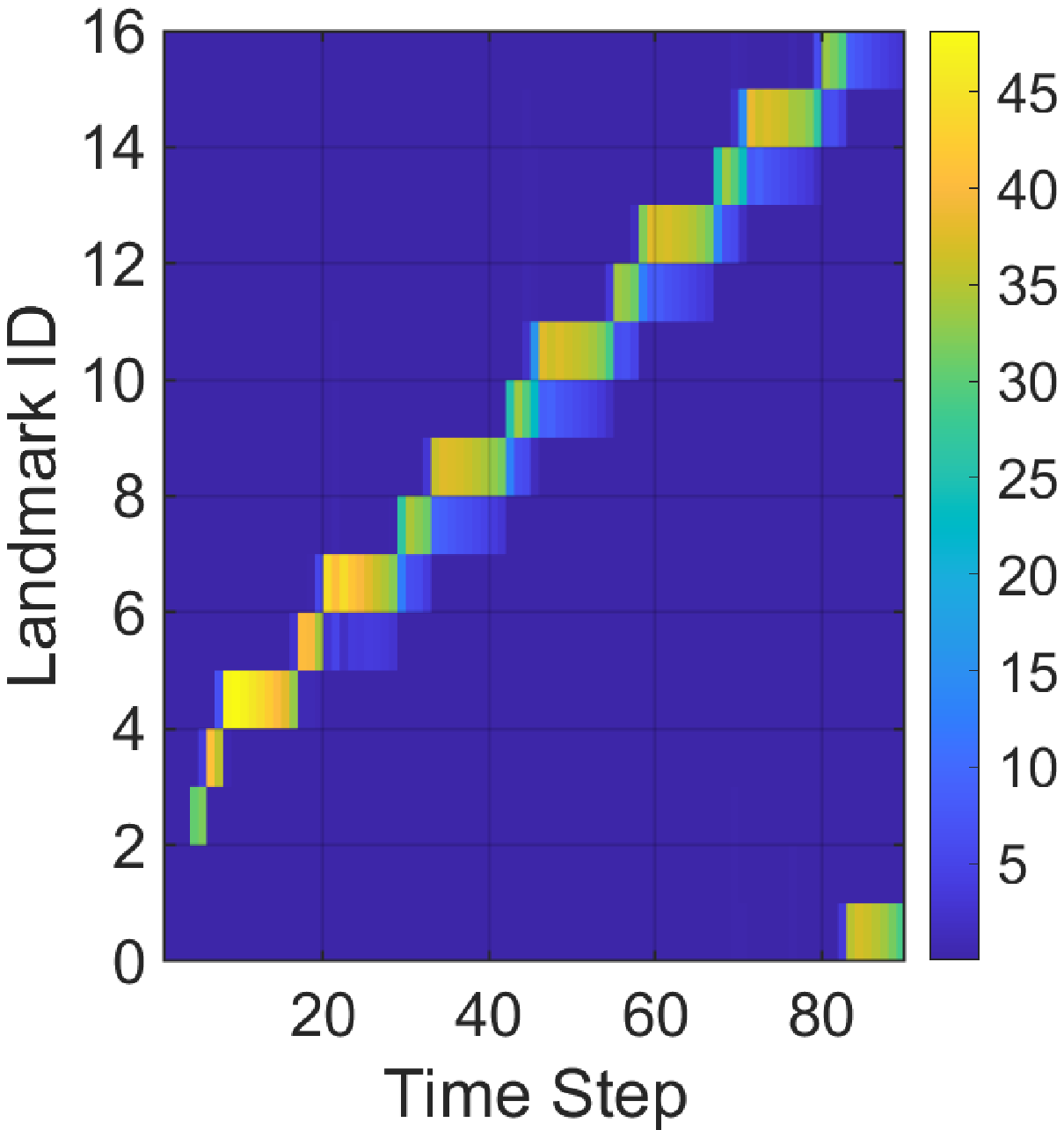}
    \label{fig:large_U_cont}}}
    \caption{Results of the proposed algorithm with $\beta = 2.5$, $H=0$, and $\hat V_i^{-1}=270$. In (a), the landmark allocated the largest data rate, $13\,\%$ of $\hat V_i^{-1}$, at $k=50$ is connected with the robot shown with the red dot by the black line. In (b), though the maximum data rate allowed has a much higher value 
    than those allocated during the simulation, we adjust the range of the color map so that the landmark allocated relatively large data rate are emphasized.}
    \label{fig:large_U}
\end{figure}

The proposed attention allocation strategy is simulated to investigate the impact of the data rate constraint.
Here, we only implement the algorithm presented in Section~\ref{sec:proposed_form} because the length of the horizon $H$ utilized is small.


\subsection{Simulation Setup} \label{ssec:sim_set}

We consider a scenario in which a mobile robot equipped with an omnidirectional camera follows a given reference trajectory
by measuring relative angles between itself and known landmarks. The state of the robot at time step $k$ is the 2-D position $[x_k~y_k]^T$ and the orientation $\theta_k$. 
Its motion is governed by the unicycle model perturbed with a Gaussian i.i.d. noise
\begin{equation*}
   \begin{bmatrix}
    \bm{x}_{k+1} \\ \bm{y}_{k+1} \\ \bm{\theta}_{k+1}
    \end{bmatrix} \!=\! 
    \begin{bmatrix}
    \bm{x}_k \\ \bm{y}_k \\ \bm{\theta}_{k}
    \end{bmatrix} \!+\!
    \begin{bmatrix}
    v_k \cos{\bm{\theta}_k} \\ v_k \sin{\bm{\theta}_k} \\ \omega_k
    \end{bmatrix} \Delta t +  \bm{w}_k,~\bm{w}_k \sim \mc N(0, W),
\end{equation*}
with the velocity and angular velocity input $u_k = [v_k~\omega_k]^T$.
We set $W = {\rm diag}(1.2, 1.2, 30)\times 10^{-3}$.

%
The visual information of landmarks 
is captured through an omnidirectional camera mounted on the robot. 
We assume that the relative angles between landmarks and the robot are obtained 
through the computer vision techniques \cite{Lowry16_visual_place} and well-known camera model \cite{Kawai11_panorama_cam}.
The measurement model can be modeled as
\begin{equation*}
    \bm{y}_{k} =  
    \begin{bmatrix}
    \arctan{\left( \frac{m_{1,y}-\bm{y}_k}{m_{1,x}-\bm{x}_k} \right)} - \bm{\theta}_k
    \\ \vdots \\
    \arctan{\left( \frac{m_{M,y}-\bm{y}_k}{m_{M,x}-\bm{x}_t} \right)} - \bm{\theta}_k
    \end{bmatrix}
    + \bm{v}_k,~ \bm{v}_k \sim \mc N(0, \hat V),
\end{equation*}
with positions of known landmarks $m_j = [m_{j,x}~m_{j,y}]^T$ for  $j \in [M]$, 
where $\hat V =  {\rm diag}(\{\hat{V}_{i}\}_{i \in [M]})$ is the noise level of the landmarks in case the robot allocates the full attention to them.
We set $\hat V_i^{-1} = 14.6,~\forall i\in [M]$, which corresponds with the robot can measure the relative angle with accuracy up to a standard variance $15$\,deg.
A scenario with more accurate sensors (with large $\hat{V}^{-1}$) is considered in Section~\ref{ssec:sim_greedy}.



The reference trajectory is depicted as the black dashed line in Fig. \ref{fig:sim_multi_beta}(a). It draws a circle with the radius of 4\,m starting from $[4~0~\pi/2]^\top$, moving in an anticlockwise direction, and ending at the lower right part at $T=90$.
The reference points are evenly spaced, and the reference direction $\theta_t^{ref}$ is the same as that of the tangent line of the circle. 
We placed 16 landmarks along the circle as indicated by green dots with their IDs.
The initial position of the robot is shown with the blue dot.

We employ the control and the state estimation scheme introduced in Section~\ref{sec:EKF} and \ref{sec:LQR}. 
For all the simulations, we set $Q = {\rm diag}(0.3, 0.3, 1.6)$ and $R = {\rm diag}(3.5, 3.5)\times~10^{-3}$. 

\subsection{Impact of the DI cost with various weight $\beta$}
We first apply the proposed method with $H = 10$ and $\beta = 18, 32$ to illustrate the effects of varying the data rate cost on the proposed data rate allocation strategy. 
Fig. \ref{fig:sim_multi_beta}(a) and (c) show the trajectory of a robot in a red line, while the mean and the covariance estimated through KF are depicted as a blue line and gray ellipses for each simulation. 
The allocated data rate for each landmark $j \in [16]$ is illustrated as contour maps in Fig. \ref{fig:sim_multi_beta}(b) and (d).


From Fig.~\ref{fig:sim_multi_beta}(b) and (d), we observe that the proposed method for both $\beta = 18, 32$ tends to allocate either zero or the maximum data rate, namely $\hat V_i^{-1}$, for most landmarks. 
Furthermore, as $\beta$ is increased to $32$, the number of landmarks allocated the large data rate becomes smaller. 
In Fig.~\ref{fig:sim_multi_beta}(a) and (c), the landmarks allocated a large data rate at $k=50$ are connected with the robot, shown with the red dot, by the black lines.

\subsection{Comparison with greedy selection} \label{ssec:sim_greedy}

To better understand the characteristics of the proposed attention allocation strategy, we compare the proposed method and the greedy algorithm based on \cite{Hashemi21_greedy_sensor} with the simulation setting same as the above.
We add the minor modification to the greedy algorithm \cite{Hashemi21_greedy_sensor} so that it minimizes the LQG cost $C_k= Tr(\Theta_k Q_{k|k}^{-1})$.
The number of landmarks to be selected at each time step is set to three.
Note that the same control and state estimation scheme to the proposed algorithm is employed.

Since the greedy algorithm does not have a receding horizon policy, we set $H = 0$ and $\beta = 2.5$ for the proposed algorithm to make a fair comparison. 
The result of the greedy algorithm and the proposed method are shown in Fig.~\ref{fig:sim_greedy} and Fig.~\ref{fig:sim_RH0}, respectively. 
First, we confirm that both methods focus on almost the same landmarks at most time steps. 
Second, the proposed algorithm does not choose any landmarks at the initial three steps as opposed to the greedy algorithm. 
This is because the initial covariance is already small, and hence trying to shrink it further at the expense of large data rate cost is not reasonable. A similar result can also be observed in Fig. \ref{fig:sim_multi_beta}.


Another clear difference between the proposed method and the greedy algorithm is the freedom of allocating moderate attention to the landmarks. 
To demonstrate how this capability affects the strategy, we apply the proposed algorithm to the setting where the robot is mounted with a more accurate visual sensor. 
Here, we set $\hat V_i^{-1} = 270,~\forall i\in [M]$, which means the robot can obtain the relative angle with a standard variance $3.5$\,deg.

Fig. \ref{fig:large_U} shows the simulation result, where the data allocation strategy completely changes from that of Fig. \ref{fig:sim_RH0} using a low-resolution sensor, although we do not change the parameters in the proposed method.
Fig. \ref{fig:large_U}(b) shows that the proposed method does not allocate the full capacity $\hat V_i^{-1} = 270,~\forall i\in [M]$ to any of the landmarks and focuses on only one landmark for most of the time steps. 
The highest data rate allocated during the simulation is $48.2$, only $18\,\%$ of $\hat V_i^{-1}$.
Even with the differences in the data rate allocation, both simulations show the equivalent tracking performance with almost the same size of covariance ellipses in Fig. \ref{fig:sim_RH0}(a) and Fig. \ref{fig:large_U}(a).
This means that the proposed algorithm can adjust its strategy to the capability of the given sensors. 
This adaptability does not appear in the greedy algorithm as it can assign only the full or zero data rate.

\section{Discussion}
\label{sec:discus}


As observed in Section~\ref{sec:simulation}, our formulation tends to admit sparse solutions, i.e., solutions with many entries such that $U_{i,t}=0$. In this section, we develop an intuition as to why (\ref{eq:prob_new}) promotes sparsity by considering a simple special case with scalar time-invariant system for which a closed-form solution is available.

Consider a scalar time-invariant system is described by
\[\bm{x}_{t+1}= a \bm{x}_{t}+ b \bm{u}_t + \bm{w}_t, \quad \bm{w}_t \overset{i.i.d.}{\sim} \mathcal{N}(0, W),\]
\[\bm{y_t}=\bm{x}_t+\bm{v_t},\quad \bm{v}_t \overset{i.i.d.}{\sim} \mathcal{N}(0, \hat{V}),\]
where $\bm{x}_t \in \mathbb{R}$, and $\bm{y}_t \in \mathbb{R}$. The  infinite-horizon limit  of (\ref{eq:relaxed_prob}) for this system is formulated as 
 \begin{subequations}
 \label{eq:def_SRA}
 \begin{align}
     \min_V \quad & \limsup_{t\rightarrow \infty} \frac{1}{T} \big [J_{1:T} + \beta \sum_{t=1}^T I(\bm{x}_t; \bm{y}_t|\bm{y}_{1:t-1})  \big ]\\
     \text{s.t.} \quad & \hat{V} \leq V,
 \end{align}
 \end{subequations}
where $V \triangleq \limsup_{t\rightarrow \infty} V_t$. Since the system is observable, $P\triangleq \lim_{t \rightarrow \infty} P_{t|t}$  exists and is computed by the algebraic Riccati equation (ARE) 
\begin{equation}
\label{eq:ARE}
    P^{-1}= (a^2P+W)^{-1}+V^{-1}.
\end{equation}
It is elementary to verify that the stationary problem (\ref{eq:def_SRA}) can be explicitly written as
 \begin{subequations}
 \label{eq:SRA}
 \begin{align}
     \min_{P,V} \quad & \theta P + \frac{\beta}{2}\log (a^2+\frac{W}{P})\\
     \text{s.t} \quad &P^{-1}= (a^2P+W)^{-1}+V^{-1}\\
     & \hat{V} \leq V_{\infty} \leq \infty,
 \end{align}
 \end{subequations}
 where $\theta \triangleq \lim_{t \rightarrow \infty} \theta_t$. Denote by $P=g(V)$ the unique positive solution to the ARE (\ref{eq:ARE}). It is easy to show that $g(V)$ is a strictly increasing function of $V$. Therefore, the problem (\ref{eq:SRA}) can be equivalently written as 
 
\begin{subequations}
\label{eq:equi_prob}
 \begin{align}
 \label{eq:equi_prob_a}
     \min_{P} \quad &  \theta P + \frac{\beta}{2}\log (a^2+\frac{W}{P})\\
     \text{s.t} \quad & g(\hat{V}) \leq P \leq  g(\infty)=\frac{W}{1-a^2} 
 \end{align}
 \end{subequations}

The convex objective function in (\ref{eq:equi_prob_a}) has a unique minimizer $P(\beta):=\frac{-\theta W+\sqrt{(\theta W)^2+4a^2\theta W \beta}}{2\theta a^2}  $ in $\mathbb{R}^{+}$ domain. Let $\beta_1$ and $\beta_2( > \beta_1)$ be the values of $\beta$ such that $P(\beta_1)= g(\hat{V})$ and $P(\beta_2)=\frac{W}{1-a^2}$, respectively. Then, the optimal solution for (\ref{eq:equi_prob}) is:
\begin{equation}
\label{eq:sol}
    P^*=\begin{cases}
    g(\hat{V})  & \text{if} \  \beta \leq \beta_1,\\
    \frac{-\theta W+\sqrt{(\theta W)^2+4a^2\theta W\beta}}{2\theta a^2} &  \text{if} \ \beta_1 < \beta \leq \beta_2,\\
    g(\infty)=\frac{W}{1-a^2} &  \text{if} \  \beta_2 < \beta.
    \end{cases}
\end{equation}
The first case happens when the weight on directed information is small and the DM decides to make the full measurement $V=\hat{V}$. The third case happens when the weight on the directed information is high and the DM decides to make no measurement. Therefore, the intermediate choice $\hat{V}< V<\infty$ of the sensing gain only occurs when $\beta_1<\beta<\beta_2$. This partly explains the sparsity promoting phenomenon.



Although the analytical method discussed above is not applicable to (\ref{eq:prob_new}) in full generality, the sparsity-promoting property of the regularization with DI may be understood by invoking its mathematical similarities to other sparsity promoting regularizers widely known in the literature \cite{candes2008enhancing}. This is postponed as our future work.






\section{CONCLUSIONS AND FUTURE WORKS}

In this paper, we studied the problem of landmark selection under a constraint on the data rate of the information flow coming from the observation. We formulated the problem as finding an optimal data rate assignment that minimizes the weighted sum of the control cost and the DI between DM's state and the observations. We showed this problem can be reformulated as a DC program, and we used the CCP algorithm to find an optimizer. We reduced the computation time of CCP by developing a scalable distributed algorithm based on ADMM. The algorithms were tested in trajectory tracking simulations, where the sparsity-promoting nature of formulation was observed. We examined the sparsity-promoting property by solving the specific instance of the scalar system for the infinite horizon limit. 





\bibliographystyle{IEEEtran}
\bibliography{ref.bib}

\end{document}